\documentclass[letterpaper]{article} 
\usepackage{aaai}  
\usepackage{times}  
\usepackage{helvet}  
\usepackage{courier}  
\usepackage[hyphens]{url}  
\usepackage{graphicx} 
\urlstyle{rm} 
\usepackage{natbib}  
\usepackage{caption} 
\frenchspacing  
\setlength{\pdfpagewidth}{8.5in} 
\setlength{\pdfpageheight}{11in} 
%
\usepackage{algorithm}
\usepackage{algorithmic}

%
\usepackage{newfloat}
\usepackage{listings}
\DeclareCaptionStyle{ruled}{labelfont=normalfont,labelsep=colon,strut=off} 
\lstset{%
	basicstyle={\footnotesize\ttfamily},
	numbers=left,numberstyle=\footnotesize,xleftmargin=2em,
	aboveskip=0pt,belowskip=0pt,%
	showstringspaces=false,tabsize=2,breaklines=true}
\floatstyle{ruled}
\newfloat{listing}{tb}{lst}{}
\floatname{listing}{Listing}
%
\pdfinfo{
/TemplateVersion (2023.1)
}

\setcounter{secnumdepth}{2} 

%


\usepackage{booktabs}
\usepackage{amsthm}
\usepackage{amsmath}
\usepackage{amssymb}
\usepackage{bm}
\usepackage{tikz}
\usetikzlibrary{shapes.geometric}
\usetikzlibrary{arrows.meta}


\newtheorem{theorem}{Theorem}
\newtheorem{lemma}[theorem]{Lemma}
\newtheorem{definition}{Definition}
\newtheorem{assumption}{Assumption}
\newcommand{\astar}{A$^\ast$}

\pagestyle{plain}

\title{Domain Knowledge in A*-Based Causal Discovery}
\author {
    Steven Kleinegesse,\textsuperscript{\rm 1}
    Andrew R. Lawrence,\textsuperscript{\rm 1}
    Hana Chockler\textsuperscript{\rm 1,\rm 2}
}
\affiliations{
    \textsuperscript{\rm 1} causaLens\\
    \textsuperscript{\rm 2} Department of Informatics, King's College London \\
    \{steven, andrew, hana\}@causalens.com
}

\begin{document}

\maketitle

\begin{abstract}
Causal discovery has become a vital tool for scientists and practitioners wanting to discover causal relationships from observational data. While most previous approaches to causal discovery have implicitly assumed that no expert domain knowledge is available, practitioners can often provide such domain knowledge from prior experience. Recent work has incorporated domain knowledge into constraint-based causal discovery. The majority of such constraint-based methods, however, assume causal faithfulness, which has been shown to be frequently violated in practice. Consequently, there has been renewed attention towards exact-search score-based causal discovery methods, which do not assume causal faithfulness, such as A*-based methods. However, there has been no consideration of these methods in the context of domain knowledge. In this work, we focus on efficiently integrating several types of domain knowledge into A*-based causal discovery. In doing so, we discuss and explain how domain knowledge can reduce the graph search space and then provide an analysis of the potential computational gains. We support these findings with experiments on synthetic and real data, showing that even small amounts of domain knowledge can dramatically speed up A*-based causal discovery and improve its performance and practicality.
\end{abstract}

\section{Introduction}

Discovering causal relationships is of key scientific and practical importance, with wide-ranging applications in molecular biology~\citep{triantafillou2017}, climate science~\citep{ebert2014}, marketing~\citep{Orriols2010} and healthcare~\citep{mani2000}, among many others. Interventions or randomized controlled trials are often the most effective and reliable means of identifying causal relationships, but they tend to be challenging, expensive and in some cases unethical. Causal discovery has emerged to tackle this problem by only requiring observational data and has received considerable attention in the recent decades~\citep[see][for a recent review]{glymour2019}.

Most approaches to causal discovery assume that no prior knowledge about the underlying causal graph is available, meaning that all causal relationships are equally plausible \emph{a priori}. In practice, however, domain experts often have information about the causal relationships between subsets of variables. They may know from prior experimentation and/or domain expertise that, e.g., two particular variables directly causally influence each other, certain pairs of variables cannot be causally connected, or that there is a causal hierarchy of variables. 
Incorporating this domain knowledge could dramatically reduce the search space of graphs in causal discovery~\citep{constantinou2021}.

Recently,~\citet{andrews2020} have proposed a constraint-based causal discovery method that integrates domain knowledge, specifically a hierarchy of nodes, into the search process. Their method, which extends Fast Causal Inference (FCI)~\citep{spirtes1995, zhang2008}, does not assume causal sufficiency, allowing it to detect the presence of latent confounders that are not present in the data. However, being a constraint-based method, the method of~\citet{andrews2020} does assume causal faithfulness, which means that any conditional independencies observed in the data must also be present in the discovered causal graph. This assumption was shown to be frequently violated in practice~\citep{uhler2013} when the number of observations in the data is limited. Furthermore, causal discovery methods based on FCI can scale exponentially with the number of variables in the data~\citep{glymour2019} and rely on expensive conditional independence (CI) tests.

There has been renewed interest in score-based causal discovery methods, which, unlike constraint-based approaches, do not rely on CI tests. While the usefulness of these methods has been limited for a long time, mainly due to poor scalability and computational efficiency, recent score-based methods such as FGES~\citep{ramsey2017}, SE-GES~\citep{chickering2020} and {\astar}-based methods~\citep{yuan2013, lu2021, ng2021} have been shown to be efficient and scalable even for hundreds of variables. To our knowledge, however, there has been no work that efficiently integrates domain knowledge into score-based causal discovery methods that do not assume causal faithfulness, such as those based on \astar. 

In this work, we focus on the efficient and sound integration of domain knowledge into {\astar}-based causal discovery algorithms. 
We believe that our approach improves applicability of score-based causal discovery methods in general and helps to bridge the gap between theoretical approaches and the needs of practitioners. In summary, the contributions of this work are as follows:
\begin{enumerate}
     \item Extending {\astar}-based causal discovery methods to efficiently integrate several types of domain knowledge;
     \item Providing a theoretical analysis of the computational gains obtained when incorporating domain knowledge;
     \item Experimental results demonstrating the computational gains and performance improvements in practice.
\end{enumerate}

\section{Background}

Let $\mathcal{G} = (\mathbf{V}, \mathbf{E})$ be a directed acyclic graph (DAG) with vertices $\mathbf{V} = \{x_1, \dots, x_d\}$ corresponding to random variables. The set of edges $\mathbf{E}$ contains tuples $(x_i, x_j) \in \mathbf{V}^2$ that specify directed edges going from nodes $x_i \in \text{Pa}_{\mathcal{G}}(x_j)$ to nodes $x_j$, where $\text{Pa}_{\mathcal{G}}(x_j) \subseteq \mathbf{V} \backslash x_j$ is the parent set of node $x_j$ in $\mathcal{G}$. The general aim of causal discovery is to use observational data to identify the correct parent set $\text{Pa}_{\mathcal{G}}(x_i)$ for each node $x_i$, while maintaining acyclicity, thereby fully recovering the true DAG $\mathcal{G}$.

\subsection{Common assumptions in causal discovery}

Causal discovery methods typically rely on a set of common assumptions,
which we briefly discuss below.

\begin{assumption}[Markov]
Consider a DAG $\mathcal{G}$ with vertices $\mathbf{V}$ and a probability distribution $\mathbb{P}$ over the random variables contained in $\mathbf{V}$. $\mathcal{G}$ and $\mathbb{P}$ satisfy the Markov assumption if and only if every variable $x_i \in \mathbf{V}$ is conditionally independent of its non-descendants $\mathbf{V} \backslash \{ \text{descendants}_{\mathcal{G}}(x_i) \cup \text{Pa}_{\mathcal{G}}(x_i) \cup x_i \}$ given its parents $\text{Pa}_{\mathcal{G}}(x_i)$.
\end{assumption}

Intuitively, the above Markov assumption says that any conditional independence relationships entailed by the DAG $\mathcal{G}$ also need to be present in the probability distribution $\mathbb{P}$. There are usually several DAGs that entail the same conditional independence relations, which are therefore known as members of the same Markov equivalence class (MEC). An MEC is uniquely determined by its skeleton and its v-structures~\citep{pearl2009}, i.e.~collider connections of the form $X \rightarrow Y \leftarrow Z$ where $X$ and $Z$ are not connected (also known as unshielded colliders). An important type of MEC is a completed partially directed acyclic graph (CPDAG), which contains both undirected and directed edges that define its skeleton and v-structures~\citep{meek1995}.

\begin{assumption}[Faithfulness] Consider a DAG $\mathcal{G}$ with vertices $\mathbf{V}$ and a probability distribution $\mathbb{P}$ over the random variables contained in $\mathbf{V}$. $\mathcal{G}$ and $\mathbb{P}$ satisfy the Faithfulness assumption if and only if $\mathbb{P}$ does not imply any conditional independence relations that are not already entailed by the Markov assumption.
\end{assumption}

The faithfulness assumption above says that any conditional independence relationships entailed by the the probability distribution $\mathbb{P}$ also need to be present in the DAG $\mathcal{G}$, which can be understood as the reverse of the Markov assumption. Due to errors in conditional independence testing, typically occurring due to finite data, this assumption has been shown to often be approximately violated in practice~\citep{uhler2013}. 

\begin{assumption}[Sufficiency] A DAG $\mathcal{G}$ satisfies the causal sufficiency assumption if there are no unmeasured common causes.
\end{assumption}

The above causal sufficiency assumption intuitively says that all common causes should be captured by the DAG, meaning that there should not be any unknown, or unmeasurable, latent confounders.

\subsection{Score-based causal discovery}

In general, score-based causal discovery methods optimize a score-function that defines the fit of candidate causal structures, e.g.~parent-children combinations, until we obtain a DAG that 
is optimal with respect to the particular score function used~\citep[see][for a recent review]{glymour2019}. Commonly-used score functions are the Bayesian information criterion (BIC) or minimum description length (MDL) for continuous data~\citep{schwarz1978}, the BGe score for linear Gaussian data~\citep{geiger1994} and the BDeu score for discrete data~\citep{heckerman1995}. 

We follow previous work on {\astar}-based causal discovery~\citep{yuan2013, lu2021, ng2021} and utilize the BIC score function, thereby assuming linearity and Gaussianity.

\begin{definition}[BIC in Causal Discovery]
Assume that the relationship between a node $x_i$ and its parents $\text{Pa}_{\mathcal{G}}(x_i)$ is modeled by a parametrized function $f_{\bm{\theta}}(\text{Pa}_{\mathcal{G}}(x_i))$ that is trained in a regression task, where $\bm{\theta}$ are some model parameters. The BIC score for node $x_i$ is given by
\begin{equation}
    S_{\text{BIC}}(x_i) = k \log{N} - 2 \log{L_{\text{max}}},
\end{equation}
where $k$ is the number of parameters in the functional relationship between parents and children, $N$ is the number of observations in the data and $L_{\text{max}}$ is the value of the maximized likelihood function after training. Under the assumptions of linearity and Gaussianity, the BIC score function can be approximated as follows,
\begin{equation}
    S_{\text{BIC}}(x_i) \approx k \log{N} + N \log{R / N},
\end{equation}
where $R$ is the sum of squared residuals from ordinary least squares regression.
\end{definition}

A prominent example of a score-based causal discovery method is greedy equivalence search (GES)~\citep{chickering2003}.
While GES works well in practice, it is a greedy method and therefore only guaranteed to converge to a local optimum. Moreover, GES relies on all of the aforementioned assumptions, i.e.~the Markov, faithfulness and sufficiency assumptions.

\subsection{{\astar}-based algorithms}

Recently, score-based causal discovery methods that are based on {\astar}~\citep{hart1968}, which is a shortest path-finding algorithm, have gained traction. {\astar}-based methods are \emph{exact-search} score-based causal discovery algorithms, meaning that they exhaustively explore the graph search space and are assured to converge to a global optimum, unlike GES. While {\astar}-based methods also rely on the Markov and sufficiency assumptions, they do not rely on the faithfulness assumption. Instead, assuming that the BIC score function is used, exact-search methods can be shown to rely on the weaker sparsest Markov representation (SMR) assumption~\citep{lu2021, ng2021}.

\begin{assumption}[Sparsest Markov Representation] Consider a DAG $\mathcal{G}$ with vertices $\mathbf{V}$ and a probability distribution $\mathbb{P}$ over the random variables contained in $\mathbf{V}$. The Markov equivalence class (MEC) of $\mathcal{G}$ is the unique sparsest MEC that satisfies the Markov assumption with $\mathbb{P}$.
\end{assumption}

The above assumption is also referred to as the (unique) frugality assumption and has various desirable properties~\citep{forster2018}. Importantly, it has been shown that causal discovery methods that rely on the faithfulness assumption can perform worse than those that rely on the SMR assumption~\citep{lu2021, ng2021}.

{\astar}-based causal discovery methods generally work by first constructing a set of parent graphs for each node, as explained below. These are then traversed efficiently using {\astar} in order to find the optimal parent set for each node.

\subsubsection{Constructing parent graphs} For a given node $x_i$, the state space of candidate parent sets $\text{Pa}_{\mathcal{G}}(x_i)$ can be represented by a \emph{parent graph}, which we denote by $\mathcal{G}_{Pa}(x_i)$. Concretely, this parent graph is a Hasse diagram, or order graph, consisting of all combinatorial subsets of variables in $\mathbf{V} \backslash x_i$~\citep[see][]{yuan2013}. Figure~\ref{fig:parent_graph_example} provides an example parent graph for one variable with three potential parents. Parent graphs are constructed in a top-down approach, with the top node being the empty set $\varnothing$ and the bottom node being all other nodes $\mathbf{V} \backslash x_i$. Importantly, each node in the parent graph represents one potential parent set that has a certain score associated with it, measuring its quality as a parent set, as provided by a score-function such as BIC. The cost of a path from the top to the bottom node is simply the sum of the scores from all traversed nodes. In this context, the shortest path in $\mathcal{G}_{Pa}(x_i)$ is the top-to-bottom path with the minimum total cost. Once we have access to the shortest path for each parent graph $\mathcal{G}_{Pa}(x_1), \dots, \mathcal{G}_{Pa}(x_d)$, we can use these to construct a DAG that is optimal by construction~\citep[see][for more information]{yuan2013}.


\subsubsection{Traversing parent graphs} As mentioned above, finding the shortest path for each parent graph allows us to construct an optimal DAG.~\citet{yuan2013} proposed to use the {\astar} path-finding algorithm to find this shortest path. One of the main advantages of {\astar} is that it uses a heuristic function that estimates the quality of yet unexplored paths, which means that it does not need to explore the whole search space in order to find the optimal solution. Although this has a higher memory-cost, it allows {\astar} to scale up to larger dimensions by reducing the search space. We refer the reader to the work of~\citet{yuan2013} for more detailed explanations of the regular {\astar}-based causal discovery method.

\subsubsection{Variations}

Various extensions to the above {\astar}-based causal discovery have been proposed, mostly aiming to increase computational efficiency and scalability. The earliest of these extensions was Triplet {\astar}~\citep{lu2021}, which runs the above {\astar}-based algorithm on triplets of variables, combines the results and then resolves conflicts between them using a set of rules. However, this extension does not represent an exact-search causal discovery method and, more importantly, it relies on constraint-based approaches, such as PC~\citep{spirtes1993} or MMMB~\citep{tsamardinos2003}, to identify appropriate triplets, thereby assuming faithfulness. These limitations were rectified by~\citet{ng2021}, who proposed {\astar}-SuperStructure and Local {\astar}. Both of these variations rely on a so-called \emph{super-structure}, which is an undirected graph that is a super-set of the skeleton of the true DAG, i.e.~the true skeleton with some spurious edges.~\citet{ng2021} use graphical LASSO~\citep{friedman2008} to estimate this super-structure and show that this estimation method relies on assumptions strictly weaker than faithfulness. Importantly, the super-structure, often represented by a binary adjacency matrix, specifies forbidden causal relationships which can be used to prune the parent graphs and reduce the search space accordingly. Local {\astar} further improves scalability by looping through local clusters of nodes, as defined by the super-structure, and then applying {\astar}-SuperStructure to each of them, combining the results and resolving any conflicts using a set of rules.

\begin{figure}[t!]
\centering
\begin{tikzpicture}[
    every node/.append style={draw, ellipse, minimum width = 2cm, minimum height = 0.8cm, semithick}]
    \node[fill=gray!10] (empty) at (2.5,0) {\Large $\varnothing$};
    \node[fill=gray!10] (x2) at (0,-1.25) {$\{x_2\}$};
    \node[fill=gray!10] (x3) at (2.5,-1.25) {$\{x_3\}$};
    \node[fill=gray!10] (x4) at (5,-1.25) {$\{x_4\}$};
    \node[fill=gray!10] (x2x3) at (0,-2.5) {$\{x_2, x_3\}$};
    \node[fill=gray!10] (x2x4) at (2.5,-2.5) {$\{x_2, x_4\}$};
    \node[fill=gray!10] (x3x4) at (5,-2.5) {$\{x_3, x_4\}$};
    \node[fill=gray!10] (x2x3x4) at (2.5,-3.75) {$\{x_2, x_3, x_4\}$};

    \path[-latex, semithick]
    	(empty) edge (x2)
    	(empty) edge (x3)
    	(empty) edge (x4)
    	(x2) edge (x2x3)
    	(x2) edge (x2x4)
    	(x3) edge (x2x3)
    	(x3) edge (x3x4)
    	(x4) edge (x2x4)
    	(x4) edge (x3x4)
    	(x2x3) edge (x2x3x4)
    	(x2x4) edge (x2x3x4)
    	(x3x4) edge (x2x3x4);
\end{tikzpicture}
    \caption{Full parent graph $\mathcal{G}_{Pa}(x_1)$ for node $x_1$ with potential parents $x_2$, $x_3$ and $x_4$.}
    \label{fig:parent_graph_example}
\end{figure}
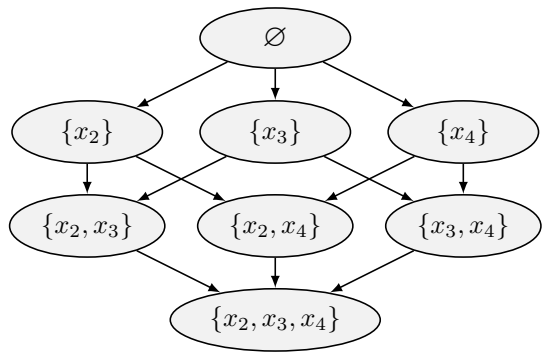

\section{Methodology}

We define different types of domain knowledge and discuss how to integrate these efficiently with {\astar}-based causal discovery methods.
We then provide a theoretical analysis of the potential computational gains obtained.

\subsection{Domain knowledge}

\begin{figure*}
\centering
\begin{minipage}{.47\linewidth}
\centering
\begin{tikzpicture}[
    every node/.append style={draw, ellipse, minimum width = 2cm, minimum height = 0.8cm, semithick}]
    \node[dotted] (empty) at (2.5,0) {\Large $\varnothing$};
    \node[dotted] (x2) at (0,-1.25) {$\{x_2\}$};
    \node[dotted] (x3) at (2.5,-1.25) {$\{x_3\}$};
    \node[fill=gray!10] (x4) at (5, -1.25) {$\{x_4\}$};
    \node[dotted] (x2x3) at (0,-2.5) {$\{x_2, x_3\}$};
    \node[fill=gray!10] (x2x4) at (2.5,-2.5) {$\{x_2, x_4\}$};
    \node[fill=gray!10] (x3x4) at (5,-2.5) {$\{x_3, x_4\}$};
    \node[fill=gray!10] (x2x3x4) at (2.5,-3.75) {$\{x_2, x_3, x_4\}$};

    \path[-latex, semithick]
    	(empty) edge[dotted, ->] (x2)
    	(empty) edge[dotted, ->] (x3)
    	(empty) edge[dotted, ->] (x4)
    	(x2) edge[dotted, ->] (x2x3)
    	(x2) edge[dotted, ->] (x2x4)
    	(x3) edge[dotted, ->] (x2x3)
    	(x3) edge[dotted, ->] (x3x4)
    	(x4) edge (x2x4)
    	(x4) edge (x3x4)
    	(x2x3) edge[dotted, ->] (x2x3x4)
    	(x2x4) edge (x2x3x4)
    	(x3x4) edge (x2x3x4);
\end{tikzpicture}
\end{minipage} \hfill
\begin{minipage}{0.47\linewidth}
\centering
\begin{tikzpicture}[
    every node/.append style={draw, ellipse, minimum width = 2cm, minimum height = 0.8cm, semithick}]
    \node[fill=gray!10] (empty) at (2.5,0) {\Large $\varnothing$};
    \node[fill=gray!10] (x2) at (0,-1.25) {$\{x_2\}$};
    \node[fill=gray!10] (x3) at (2.5,-1.25) {$\{x_3\}$};
    \node[dotted] (x4) at (5,-1.25) {$\{x_4\}$};
    \node[fill=gray!10] (x2x3) at (0,-2.5) {$\{x_2, x_3\}$};
    \node[dotted] (x2x4) at (2.5,-2.5) {$\{x_2, x_4\}$};
    \node[dotted] (x3x4) at (5,-2.5) {$\{x_3, x_4\}$};
    \node[dotted] (x2x3x4) at (2.5,-3.75) {$\{x_2, x_3, x_4\}$};

    \path[-latex, semithick]
    	(empty) edge[] (x2)
    	(empty) edge[] (x3)
    	(empty) edge[dotted, ->] (x4)
    	(x2) edge[] (x2x3)
    	(x2) edge[dotted, ->] (x2x4)
    	(x3) edge[] (x2x3)
    	(x3) edge[dotted, ->] (x3x4)
    	(x4) edge[dotted, ->] (x2x4)
    	(x4) edge[dotted, ->] (x3x4)
    	(x2x3) edge[dotted, ->] (x2x3x4)
    	(x2x4) edge[dotted, ->] (x2x3x4)
    	(x3x4) edge[dotted, ->] (x2x3x4);
\end{tikzpicture}
\end{minipage}
    \caption{Pruned parent graph $\mathcal{G}_{Pa}(x_1)$ for variable $x_1$ with potential parents $x_2$, $x_3$ and $x_4$ when incorporating a known edge $x_4 \rightarrow x_1$ (left) and when incorporating a forbidden edge between $x_1$ and $x_4$ (right).}
    \label{fig:parent_graph_pruned}
\end{figure*}
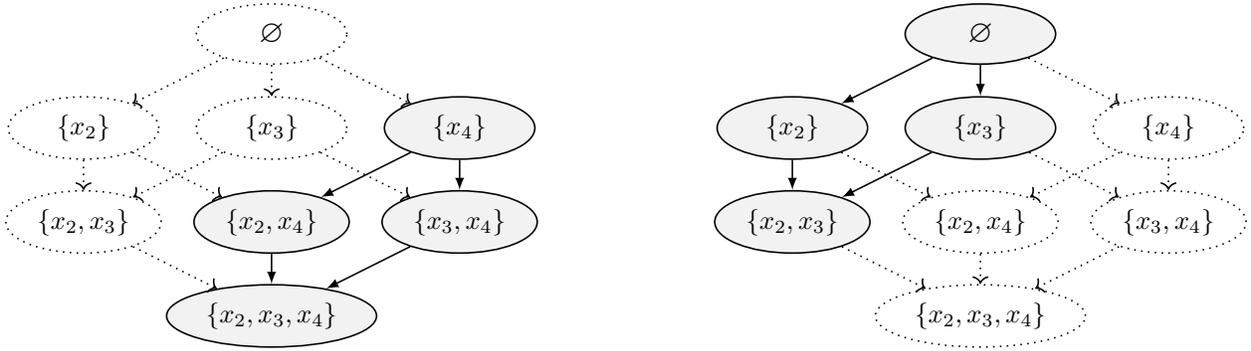

Many approaches to causal discovery assume that all potential causal structures are equally likely \emph{a priori}~\citep[e.g.][]{chickering2003, yuan2013, lu2021}. In practice, however,
domain experts typically have useful prior experience that they can leverage to formulate domain knowledge about certain causal relationships. It may then be beneficial to integrate this domain knowledge into the causal discovery procedure in order to improve computational efficiency and accuracy. For instance, it is generally accepted that a person's age is a causal driver of their physical height, and not the other way around.

There are several ways that domain knowledge can be expressed and, conversely, encoded to be useful in causal discovery~\citep{constantinou2021}. In this work we focus on the following three common ways: 

\begin{definition}[Known edges]
Domain knowledge that encodes a known, directed edge $x_i \rightarrow x_j$ between two variables $x_i, x_j \in \mathbf{V}$ implies that $x_i$ is most definitely a member of the parent set of $x_j$, i.e.~$x_i \in \text{Pa}_{\mathcal{G}}(x_j)$. 
\end{definition}

\begin{definition}[Forbidden edges]
Domain knowledge that encodes a forbidden edge between two variables $x_i, x_j \in \mathbf{V}$ implies that neither $x_i$ nor $x_j$ can be a member of each other's parent set, i.e.~$x_i \not\in \text{Pa}_{\mathcal{G}}(x_j)$ and $x_j \not\in \text{Pa}_{\mathcal{G}}(x_i)$.
\end{definition}

\begin{definition}[Tiers]
Domain knowledge that encodes tiers $\mathbf{T} = \{\mathbf{T}_1, \dots, \mathbf{T}_n \}$ specifies a causal hierarchy of variables, which states that variables in later tiers, e.g., $\mathbf{T}_n$, cannot be causal drivers of variables in earlier tiers, e.g., $\mathbf{T}_1$. Specifically, if variable $x_i$ is a member of $\mathbf{T}_k$, variable $x_j$ is a member of $\mathbf{T}_s$ and $k < s$, the domain knowledge indicates that $x_i$ may be a parent of $x_j$, but $x_j$ cannot be a parent of $x_i$. If $x_i$ and $x_j$ are in the same tier, i.e.~$k=s$, then either variable may be a parent of the other.
\end{definition}

~\citet{meek1995} was among the first to consider domain knowledge that contains known edges and forbidden edges in constraint-based causal discovery. Similarly, \citet{scheines1998} were the first to insert tiered domain knowledge into constraint-based causal discovery. More recently,~\citet{andrews2020} inserted tiered domain knowledge into the popular FCI~\citep{spirtes1995, zhang2008} constraint-based algorithm. To our knowledge, however, the above ways of encoding domain knowledge have not been used in conjunction with exact-search score-based causal discovery methods.

We here make a few assumptions about the provided domain knowledge. 
First, we assume that there is \emph{no conflicting domain knowledge}, i.e.~we cannot specify a known edge $x_i \rightarrow x_j$ when we have already specified a forbidden edge between both variables. 
Second, we assume that the \emph{domain knowledge is correct}, i.e.~that any relationships encoded in the domain knowledge are also present in the ground-truth. Third, we assume that known and forbidden edges are examples of \emph{hard} domain knowledge, i.e.~the encoded domain knowledge needs to be present in the output of the algorithm. Tiers, on the other hand, is a mixture of \emph{soft and hard} domain knowledge, since variables in earlier tiers can potentially be parents of those in later tiers (soft), but those in later tiers cannot be parents of those in earlier tiers (hard).


\subsection{Integrating domain knowledge with {\astar}}


Previous exact-search score-based causal discovery methods that could not leverage domain knowledge have had to explore the entire graph search space. By effectively integrating domain knowledge, however, we are able to constrain this search space and, therefore, speed up causal discovery.
In this work we are concerned with {\astar}-based causal discovery, which mainly consists of two steps: 1) constructing the parent graphs, including computing all relevant scores, and 2) traversing these parent graphs using {\astar} to find shortest paths, which can then be combined to yield optimal parent sets. Below, we discuss how integration of the aforementioned types of domain knowledge affects each step.

\subsubsection{Pruning of the parent graphs} As explained previously, parent graphs are ordered graphs of all potential parent sets, as shown in Figure~\ref{fig:parent_graph_example}. Let us now assume that we are provided with some domain knowledge that is in direct violation with some of these potential parent sets. Since we are assuming that our domain knowledge is correct, we can prune away those potential parent sets that are causing the violation, only leaving parent sets that are allowed by the provided domain knowledge. 
Consequently, we do not need to perform expensive score function evaluations for those potential parent sets that have been pruned away, which can result in significant computational gains.

In this work, we consider several different types of domain knowledge, specifically known edges, forbidden edges and tiers. First, consider the case of incorporating a known edge $x_i \rightarrow x_j$ into the causal discovery process. By definition, $x_i$ has to be included in the parent set of $x_j$, which means that the parent graph $\mathcal{G}_{Pa}(x_j)$ can be pruned in such a way that it only contains parent sets that incorporate $x_i$. Conversely, the optimal parent set of $x_i$ cannot include $x_j$ and, therefore, all potential parent sets in the parent graph $\mathcal{G}_{Pa}(x_i)$ that contain $x_j$ can be pruned away. We visualize this process in the left of Figure~\ref{fig:parent_graph_pruned} for a small parent graph.

Next, consider the case of incorporating a forbidden edge between two variables $x_i$ and $x_j$, implying that neither of them can be in the parent set of the other. Similar to before, we can prune away all potential parent sets in their parent graphs that contain the other variable, such that no parent sets in $\mathcal{G}_{Pa}(x_i)$ contain $x_j$, and vice versa. This process is visualized for a small parent graph in the right of Figure~\ref{fig:parent_graph_pruned}.

Lastly, let us consider domain knowledge that includes tiers, i.e.~a hierarchy of variables. This type of domain knowledge specifies that variables in tier $\mathbf{T}_k$ may be parents of variables in $\mathbf{T}_s$, for $k < s$, but members in $\mathbf{T}_s$ cannot be parents of members in $\mathbf{T}_k$. Therefore, assuming that a variable $x_i$ is in tier $\mathbf{T}_k$, we can prune its parent graph $\mathcal{G}_{Pa}(x_i)$ such that no potential parent sets contain any members in later tiers $\mathbf{T}_{>k}$. 
We here focus on the common and practical scenario of having one source variable and one sink variable. This results in three tiers: 1) a source variable, 2) all other variables and 3) a sink variable. In this setting, the parent graph of the source variable can be pruned such that it only contains the empty set $\varnothing$. Similarly, the parent graphs of all variables besides the source and sink variables can be pruned such that they do not 
include the sink variable. See the supplementary materials for more information on the general case of arbitrary tiers.

\subsubsection{Faster shortest-path finding} 

Using domain knowledge to prune parent graphs in {\astar}-based causal discovery, as explained above, directly reduces the number of necessary score function evaluations. In addition, pruning parent graphs further allows us to reduce the number of possible paths from the top node to the bottom node in the parent graph.
This implies an important speed-up during the second step of {\astar}-based methods, which is concerned with finding the shortest-path from the top to bottom node using the {\astar} algorithm. Unfortunately, it is difficult to estimate the impact of reducing the number of potential paths, due to the heuristic function that {\astar} uses to limit the search space. Furthermore, {\astar} may ignore entire parts of the parent graph if it notices that the scores become worse as it gets towards the bottom node~\citep[see][]{yuan2013}. This may result in the bottom node not being the final node during the shortest path-finding, further complicating an analysis of how domain knowledge exactly impacts this procedure.  Nonetheless, this operation reduces the path-finding search space in the worst-case scenario and could therefore have a significant impact, especially for a large number of variables where the space of possible paths is exponentially large. As an example, consider the parent graph in Figure~\ref{fig:parent_graph_example} and let us insert a known edge, as done in Figure~\ref{fig:parent_graph_pruned}, which results in a reduction of the number of top-to-bottom paths from $6$ to $2$.

\subsubsection{Applying Meek's rules}

{\astar}-based causal discovery algorithms are generally only guaranteed to recover the true completed partially directed acyclic graph (CPDAG)~\citep[see][]{ng2021}, which represents the Markov equivalence class (MEC) of the underlying true DAG. Often, this CPDAG is computed directly from a DAG estimate that is obtained after running {\astar}-based causal discovery~\citep[as in][]{ng2021}. While the original DAG estimate may be consistent with the provided domain knowledge, this may not be true for the corresponding CPDAG. As such, we here propose to apply the common rule set of~\citet{meek1995} to insert the provided domain knowledge into the CPDAG obtained from {\astar}-based causal discovery. Consequently, this \emph{modified CPDAG} no longer represents the MEC of the true DAG. 


\subsection{Computational gains}

\begin{figure*}[t!]
    \centering
    \includegraphics[width=0.97\linewidth]{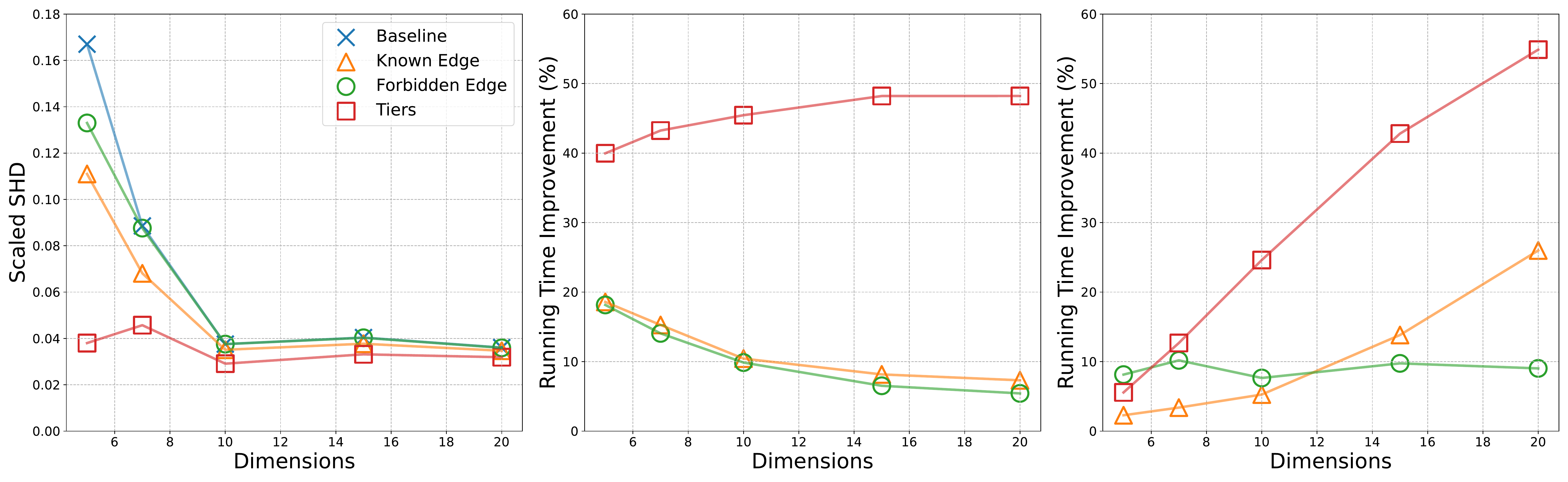}
    \caption{Results for applying {\astar}-based causal discovery on the synthetic data set when incorporating different types of domain knowledge. \emph{Left:} Performance improvements as measured by the scaled SHD (lower is better) for varying dimensions, averaged over $100$ repeats. \emph{Middle:} Percentage improvement in running times of regular {\astar} (middle) for varying dimensions, as compared to not incorporating any domain knowledge, averaged over $100$ repeats. 
    \emph{Right:} Percentage improvement in running times of {\astar}-SuperStructure for varying dimensions, as compared to not incorporating any domain knowledge, averaged over $100$ repeats. 
    }
    \label{fig:synthetic_results}
\end{figure*}

We have previously explained how the number of score function evaluations can be reduced by pruning the parent graphs according to the domain knowledge. The exact number of final score function evaluations, however, is difficult to predict for any combination of domain knowledge. Nonetheless, for certain types of domain knowledge it is possible to provide estimates of the computational gains by deriving the exact reduction in score function evaluations. Additionally, the pruned parent graphs also result in a potentially faster executing of the \astar shortest-path finding algorithm, since fewer paths need to be searched. Quantifying the complexity of this operation, and the corresponding gains, is difficult due to the heuristic function in \astar~\citep{lu2021, ng2021}, which is why we focus on the reduction in score function evaluations to provide us with an estimate of the overall computational gains. In the interest of space, proofs of the following lemmas and theorems appear in the supplementary material.

We here only consider regular {\astar} and not the variations {\astar}-SuperStructure or Local {\astar}, since these rely on a super-structure that also results in a pruning of the parent graphs.
Because estimation of this super-structure is data-driven, it is not possible to predict the extent of this pruning and its interaction with the pruning resulting from domain knowledge. We note, however, that {\astar} is a special worst-case scenario of {\astar}-SuperStructure and Local {\astar}~\citep{ng2021}.

First, we quantify the number of score function evaluations in the worst-case scenario of regular \astar-based causal discovery by means of the following lemma.

\begin{lemma}
A full parent graph $\mathcal{G}_{\text{PA}}$ has $2^{(p-1)}$ nodes, each representing a different potential parent set. Given a data set containing $p$ variables, \astar-based causal discovery requires $p2^{(p-1)}$ score function evaluations in the worst case.
\end{lemma}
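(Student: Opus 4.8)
The plan is to prove the node-count claim for a single parent graph by a direct combinatorial argument, and then sum over all variables. First I would fix a variable $x_i \in \mathbf{V}$ and recall that, by construction, its parent graph $\mathcal{G}_{Pa}(x_i)$ is the Hasse diagram (order graph) whose nodes are \emph{all} subsets of the candidate-parent set $\mathbf{V} \backslash x_i$. Since $|\mathbf{V}| = p$, this candidate set has $p-1$ elements, so the number of its subsets — from the top node $\varnothing$ down to the bottom node $\mathbf{V} \backslash x_i$ — is exactly $2^{p-1}$, and each subset is one node representing a distinct potential parent set. This establishes the first part of the statement.

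Next I would bound the score function evaluations needed to build the score-annotated parent graph. Each node of $\mathcal{G}_{Pa}(x_i)$ must be assigned a score $S_{\text{BIC}}(\cdot)$ measuring its quality as a parent set of $x_i$, and since these scores can be cached, no $(x_i, \text{Pa})$ pair is evaluated more than once. Thus the worst case for a single parent graph is exactly $2^{p-1}$ evaluations, attained when every node must be inspected, i.e.\ when \astar's heuristic yields no pruning (and, for the super-structure variants, when no a priori pruning applies). Summing over the $p$ variables $x_1, \dots, x_d$ with $d = p$ — whose score sets are disjoint, so no shared work can be subtracted — gives the claimed bound of $p \cdot 2^{p-1}$ score function evaluations.

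Since this is essentially a counting exercise, I do not anticipate a real obstacle; the only point needing care is the phrase \emph{in the worst case}. I would make explicit that an admissible heuristic lets \astar terminate before expanding all nodes, and that the super-structure-based extensions prune parent graphs before search begins, so that $p \cdot 2^{p-1}$ is an upper bound which is tight precisely for regular \astar with no effective pruning — consistent with the paper's choice to restrict the subsequent analysis to regular \astar.
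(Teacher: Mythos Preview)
Your proposal is correct and matches the paper's approach: both count the nodes of $\mathcal{G}_{Pa}(x_i)$ as the subsets of a $(p-1)$-element set and then multiply by $p$. The only cosmetic difference is that the paper sums the level sizes $\sum_{l=0}^{p-1} C(p-1,l)$ and invokes the binomial theorem, whereas you invoke the power-set cardinality $2^{p-1}$ directly.
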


Let us then consider the case of incorporating known, directed edges into {\astar}-based causal discovery. Theorem~\ref{theorem:known} provides an estimate of the computational gains when incorporating one known, directed edge.

\begin{theorem} \label{theorem:known}
Given a set of full parent graphs, incorporating the domain knowledge of one directed edge reduces the maximum number of score function computations in regular {\astar}-based causal discovery from $p2^{p-1}$ to $(p-1)2^{p-1}$, where $p$ is the total number of nodes.
\end{theorem}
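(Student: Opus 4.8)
The plan is to build directly on Lemma~1, which already fixes the baseline: with $p$ variables there are $p$ parent graphs, each a full Hasse diagram over the $2^{p-1}$ subsets of $\mathbf{V}\setminus x_i$, giving the worst-case total of $p2^{p-1}$ score evaluations. The central observation is that incorporating one known edge $x_i \rightarrow x_j$ touches exactly two of these $p$ parent graphs, namely $\mathcal{G}_{Pa}(x_j)$ and $\mathcal{G}_{Pa}(x_i)$, and leaves the remaining $p-2$ untouched; so the proof reduces to recounting the surviving nodes in those two graphs and re-summing.

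First I would count the surviving nodes of $\mathcal{G}_{Pa}(x_j)$. By the definition of a known edge, $x_i \in \text{Pa}_{\mathcal{G}}(x_j)$, so pruning retains only parent sets that contain $x_i$; these are in bijection with the subsets of $\mathbf{V}\setminus\{x_i,x_j\}$ (a set of size $p-2$) via $S \mapsto S \cup \{x_i\}$, hence there are exactly $2^{p-2}$ of them. Next I would count $\mathcal{G}_{Pa}(x_i)$: since the output must be a DAG, the edge $x_i \rightarrow x_j$ forbids $x_j \in \text{Pa}_{\mathcal{G}}(x_i)$ (this would create the $2$-cycle $x_i \rightarrow x_j \rightarrow x_i$), so pruning retains only parent sets avoiding $x_j$, i.e.\ exactly the subsets of $\mathbf{V}\setminus\{x_i,x_j\}$, again $2^{p-2}$ of them. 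Assembling the total is then a one-line computation: the $p-2$ unaffected parent graphs contribute $(p-2)2^{p-1}$ and the two pruned graphs contribute $2 \cdot 2^{p-2} = 2^{p-1}$, for a grand total of $(p-1)2^{p-1}$. As in Lemma~1, each retained node costs at most one score evaluation in the worst case (A$^\ast$ may skip some), so this figure is the claimed worst-case count, and it is strictly below $p2^{p-1}$.

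The main obstacle is not the arithmetic but making precise the two structural claims that underlie it: that a single known edge induces no further constraints on any other parent graph (no transitive or additional forbidden-parent implications beyond the two identified), and that the assumptions of \emph{correct} and \emph{non-conflicting} domain knowledge justify discarding the violating parent sets without risking loss of the optimal DAG. Once these are stated carefully, the node counts and the summation follow immediately, and the same template will generalize to the forbidden-edge and tier cases treated subsequently.
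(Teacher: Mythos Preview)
Your proposal is correct and follows essentially the same approach as the paper: identify that only the two parent graphs $\mathcal{G}_{Pa}(x_i)$ and $\mathcal{G}_{Pa}(x_j)$ are affected, show each prunes to $2^{p-2}$ nodes, and sum with the $(p-2)2^{p-1}$ unaffected contributions. The only cosmetic difference is that you count the surviving parent sets in $\mathcal{G}_{Pa}(x_j)$ via the bijection $S \mapsto S \cup \{x_i\}$, whereas the paper reaches the same $2^{p-2}$ by evaluating $\sum_{l=1}^{p-1} C(p-2,l-1)$.
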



Next, we consider incorporating a forbidden edge into {\astar}-based causal discovery, with Theorem~\ref{theorem:forbidden} providing an estimate of the corresponding computational gains.

\begin{theorem} \label{theorem:forbidden}
Given a set of full parent graphs, incorporating the domain knowledge of one forbidden edge reduces the maximum number of score function computations in regular {\astar}-based causal discovery from $p2^{p-1}$ to $(p-1)2^{p-1}$, where $p$ is the total number of nodes.
\end{theorem}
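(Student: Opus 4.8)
The plan is to reuse the counting argument behind Lemma 1, but restricted to the two parent graphs that the forbidden edge actually affects. Fix the forbidden edge to be between $x_i$ and $x_j$. By Definition (Forbidden edges) we have $x_i \notin \text{Pa}_{\mathcal{G}}(x_j)$ and $x_j \notin \text{Pa}_{\mathcal{G}}(x_i)$, so the pruning step discards from $\mathcal{G}_{Pa}(x_i)$ every node (candidate parent set) containing $x_j$, and from $\mathcal{G}_{Pa}(x_j)$ every node containing $x_i$; the remaining $p-2$ parent graphs $\mathcal{G}_{Pa}(x_k)$ with $k \neq i,j$ are untouched. First I would count the surviving nodes of $\mathcal{G}_{Pa}(x_i)$: its nodes are in bijection with subsets of $\mathbf{V}\setminus\{x_i\}$ (there are $2^{p-1}$ of them), and forbidding $x_j$ restricts this to subsets of $\mathbf{V}\setminus\{x_i,x_j\}$, a set of size $p-2$, hence $2^{p-2}$ surviving nodes. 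The identical argument gives $2^{p-2}$ surviving nodes in $\mathcal{G}_{Pa}(x_j)$.

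Summing over all parent graphs, and invoking (as in Lemma 1) the worst case in which \astar{} must evaluate the score of every node that has not been pruned, the total number of score function computations becomes
\begin{equation}
(p-2)\,2^{p-1} + 2\cdot 2^{p-2} \;=\; (p-2)\,2^{p-1} + 2^{p-1} \;=\; (p-1)\,2^{p-1},
\end{equation}
which is the claimed reduction from $p\,2^{p-1}$. To make this tight I would note that each pruned parent graph is itself a \emph{full} parent graph over the reduced variable set $\mathbf{V}\setminus\{x_j\}$ (respectively $\mathbf{V}\setminus\{x_i\}$), so Lemma 1's worst-case reasoning applies to it verbatim, and that no node is double-counted since the $p$ parent graphs are indexed by distinct target nodes.

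The main subtlety here is not a genuine obstacle but a matter of stating the bound carefully: scores in \astar{} are computed lazily, only for nodes that are actually expanded, so the theorem is about the \emph{maximum} number of such computations, and the worst case is precisely the one forcing expansion of every unpruned node. Under that reading the saving is exactly the $2^{p-2}$ nodes removed from each of the two affected parent graphs, i.e.\ $2\cdot 2^{p-2} = 2^{p-1}$. Finally I would remark that this bound coincides numerically with Theorem~\ref{theorem:known}: a known edge $x_i \to x_j$ removes the $2^{p-2}$ nodes of $\mathcal{G}_{Pa}(x_j)$ that do \emph{not} contain $x_i$ together with the $2^{p-2}$ nodes of $\mathcal{G}_{Pa}(x_i)$ that \emph{do} contain $x_j$, the same total saving as a forbidden edge, even though the two kinds of domain knowledge prune different nodes.
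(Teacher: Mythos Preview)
Your proof is correct and follows essentially the same approach as the paper: both argue that the forbidden edge reduces the pool of candidate parents of each of $x_i$ and $x_j$ from $p-1$ to $p-2$, giving $2^{p-2}$ surviving nodes in each affected parent graph, and then sum $(p-2)2^{p-1}+2\cdot 2^{p-2}=(p-1)2^{p-1}$. Your added remarks on the worst-case interpretation and the comparison with Theorem~\ref{theorem:known} go slightly beyond the paper's own proof but are consistent with the surrounding discussion in the text.
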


Interestingly, this estimate of the computational gains for incorporating a forbidden edge is exactly the same as that for incorporating a known edge. This may provide useful advice for practitioners since, in the age of big data, causal graphs in practice tend to be sparse. A domain expert may, therefore, find it easier to specify a forbidden edge than a known edge. Since both cases yield the same computational gains, domain experts may therefore want to first focus on specifying as many forbidden relationships as possible.

Lastly, we consider the case of incorporating tiered domain knowledge into {\astar}-based causal discovery. As mentioned before, we here focus on the common and practical case of having one source node and one sink node, ultimately resulting in three tiers. Theorem~\ref{theorem:tiers} provides an estimate of the ensued computational gains (see the supplementary material for a theorem with tiers of arbitrary sizes).

\begin{theorem} \label{theorem:tiers}
Given a set of full parent graphs, incorporating tiered domain knowledge that includes exactly one source variable and one sink variable, resulting in three tiers, halves the maximum number of score function computations in regular {\astar}-based causal discovery.
\end{theorem}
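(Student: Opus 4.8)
The plan is to count, tier by tier, the number of candidate parent sets that survive the tier-based pruning described above, sum these counts, and compare the total with the baseline $p\,2^{p-1}$ given by Lemma~1. Label the $p$ variables by tier membership: let $s$ denote the unique source ($\mathbf{T}_1$), let $M = \mathbf{T}_2$ be the $p-2$ middle variables, and let $t$ denote the unique sink ($\mathbf{T}_3$). Since each surviving node of a parent graph incurs one score computation, the total number of computations is just the sum of the sizes of the pruned parent graphs $\mathcal{G}_{Pa}(s)$, $\mathcal{G}_{Pa}(x_i)$ for $x_i \in M$, and $\mathcal{G}_{Pa}(t)$.

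First I would handle the source $s$: every other variable lies in a strictly later tier, so none of them may be a parent of $s$; hence $\mathcal{G}_{Pa}(s)$ is pruned down to the single node $\varnothing$, contributing $1 = 2^0$ evaluation. Next, for a middle variable $x_i \in M$: the source and the other $p-3$ members of $M$ are all admissible parents, but the sink $t \in \mathbf{T}_3$ is not, so the candidate parent sets of $x_i$ are exactly the subsets of $\mathbf{V}\setminus\{x_i,t\}$, a set of size $p-2$; this gives $2^{p-2}$ evaluations for each of the $p-2$ middle variables. Finally, for the sink $t$: all $p-1$ remaining variables lie in earlier tiers and are admissible parents, so $\mathcal{G}_{Pa}(t)$ is not pruned at all and contributes $2^{p-1}$ evaluations.

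Summing yields $1 + (p-2)\,2^{p-2} + 2^{p-1}$ score computations. Writing $2^{p-1} = 2\cdot 2^{p-2}$ collapses the last two terms to $p\,2^{p-2}$, so the total is $p\,2^{p-2} + 1 = \tfrac12\,p\,2^{p-1} + 1$ — exactly half of the Lemma~1 baseline, plus the single evaluation of the forced empty parent set of the source. The ratio to the baseline is therefore $\tfrac12 + 1/(p\,2^{p-1})$, which equals $\tfrac12$ up to a vanishing lower-order term; I would state the conclusion as ``halves'' with this $O(1)$ caveat, or drop the source term altogether since the source's parent set is fully determined and arguably requires no scoring. Because tier-based pruning is purely structural, the count above is exact rather than a mere upper bound, so invoking Lemma~1 for the baseline is the only worst-case input needed.

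The one place an error can creep in is the middle-tier count: one must verify that exactly one potential parent — the sink, and no other variable — is removed for each middle variable, so that the exponent drops from $p-1$ to $p-2$ and not further, while the source and all same-tier variables remain admissible parents. This is the crux of the argument; once it is pinned down, the remainder is elementary arithmetic.
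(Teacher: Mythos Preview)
Your proof is correct and follows essentially the same tier-by-tier counting as the paper: source, then the $p-2$ middle variables, then the sink, with the same exponent reduction $p-1 \to p-2$ for the middle tier. The only difference is that the paper assigns zero score computations to the source (since its parent set is fully determined to be $\varnothing$), obtaining exactly $(p-2)2^{p-2} + 2^{p-1} = \tfrac{p}{2}\,2^{p-1}$, which is precisely the ``drop the source term'' option you already flagged.
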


The above theorem implies that the potential computational gains from incorporating tiered domain knowledge far outweigh those from incorporating known or forbidden edges, especially when the underlying causal graph is large. Theorem~\ref{theorem:tiers} therefore suggests that specifying tiered domain knowledge should be a priority for practitioners.

\section{Experiments}

In this section we provide experimental results of the computational gains obtained when integrating several types of domain knowledge with {\astar}-based causal discovery methods. We first consider synthetic data that has been generated using randomly-sampled DAGs.
We then study the application of our approach on the commonly-used, real Sachs data set~\citep{sachs2005}.
See the supplementary material for more detailed explanations of the experimental setup.

\subsection{Evaluation}

We evaluate the performance of all causal discovery methods by their ability to recover the true graph, as measured by the structural hamming distance (SHD) between \emph{modified} CPDAGs. 
Where applicable, we also scale the SHD by the number of possible edges $p (p - 1) / 2$.
Lastly, we also assess all methods by their computational running time.

\subsection{Synthetic Data}

We here consider synthetic data that has been generated using a linear structural equation model (SEM) with Gaussian noise and a randomly-sampled Erd\H{o}s-R\'{e}nyi graph. For all synthetic data experiments, we sample ground-truth DAGs with an average degree of $2$ and varying dimensions of $p \in \{ 5, 7, 10, 15, 20\}$. For each DAG, we use a linear SEM with randomly-sampled parameters to generate $500$ observations. Throughout our experiments, known edges are randomly sampled from the set of true edges, whereas forbidden edges are randomly sampled from its complement set. Similarly, the source and sink node for the tiers domain knowledge are randomly sampled from the set of true source and sink nodes, respectively. 


\subsubsection{Results}

Figure~\ref{fig:synthetic_results} summarizes the performance and computational gains obtained when integrating different types of domain knowledge with {\astar}-based causal discovery. The left figure shows the scaled SHD as a function of data dimensions for regular {\astar}-based causal discovery.
While all approaches result in smaller scaled SHD for increasing dimensions, it is apparent that integrating domain knowledge can largely reduce the SHD across all dimensions. The effect of incorporating tiers appears to be the highest, followed by that of incorporating known edges and then, lastly, that of incorporating forbidden edges. Using Student's t-tests, we are able to ascertain that the SHD improvements for tiers and known edges are, in fact, statistically significant with a p-value threshold of $0.05$, while those for forbidden edges are not (see the supplementary material for more information). 

Next, we turn towards percentage improvements in the running time of {\astar}-based causal discovery, as shown in the middle and right of Figure~\ref{fig:synthetic_results}. The middle plot shows how incorporating different types of domain knowledge can help speed up regular {\astar}. The computational gains for incorporating known edges and for incorporating forbidden edges fall off inversely proportional to the data dimensionality, which is entirely in line with our theoretical analysis in the methodology section, i.e.~Theorems~\ref{theorem:known} and~\ref{theorem:forbidden}. For tiers, the percentage improvement starts off at around 40\% and then increases to 50\% at $15$ and $20$ dimensions, which matches the gain predicted by Theorem~\ref{theorem:tiers}. While there is a slight mismatch for small dimensions, the corresponding confidence intervals (see the supplementary material) confidently capture the expected gain.

Similarly, the right plot in Figure~\ref{fig:synthetic_results} shows how incorporating domain knowledge can help speed up {\astar}-SuperStructure (we show a similar plot for Local {\astar} in the supplementary material). Interestingly, the behavior for {\astar}-SuperStructure appears largely different to that of regular {\astar}. Here, the computational gains for incorporating known edges and incorporating tiers grow nearly linearly with the dimensions, whereas the running time improvements for incorporating forbidden edges stay roughly constant. 
Recall that {\astar}-SuperStructure has a data-driven skeleton-estimation step which can have large computational overhead. In fact, this overhead may be the computational bottleneck of {\astar}-SuperStructure at small dimensions, as can been seen when looking at absolute running times as a function of dimensions (see the supplementary material). While integrating domain knowledge helps in reducing the number of score function evaluations, it does not help with the data-driven estimation of the super structure in {\astar}-SuperStructure. Moreover, as noted previously, Theorems~\ref{theorem:known}$-$\ref{theorem:tiers} are only valid for regular {\astar}-based causal discovery, because of the unclear interaction of the super-structure induced parent graph pruning and the domain knowledge induced pruning.



\subsection{Sachs Data}

\begin{figure}
\centering
    \includegraphics[width=1\linewidth]{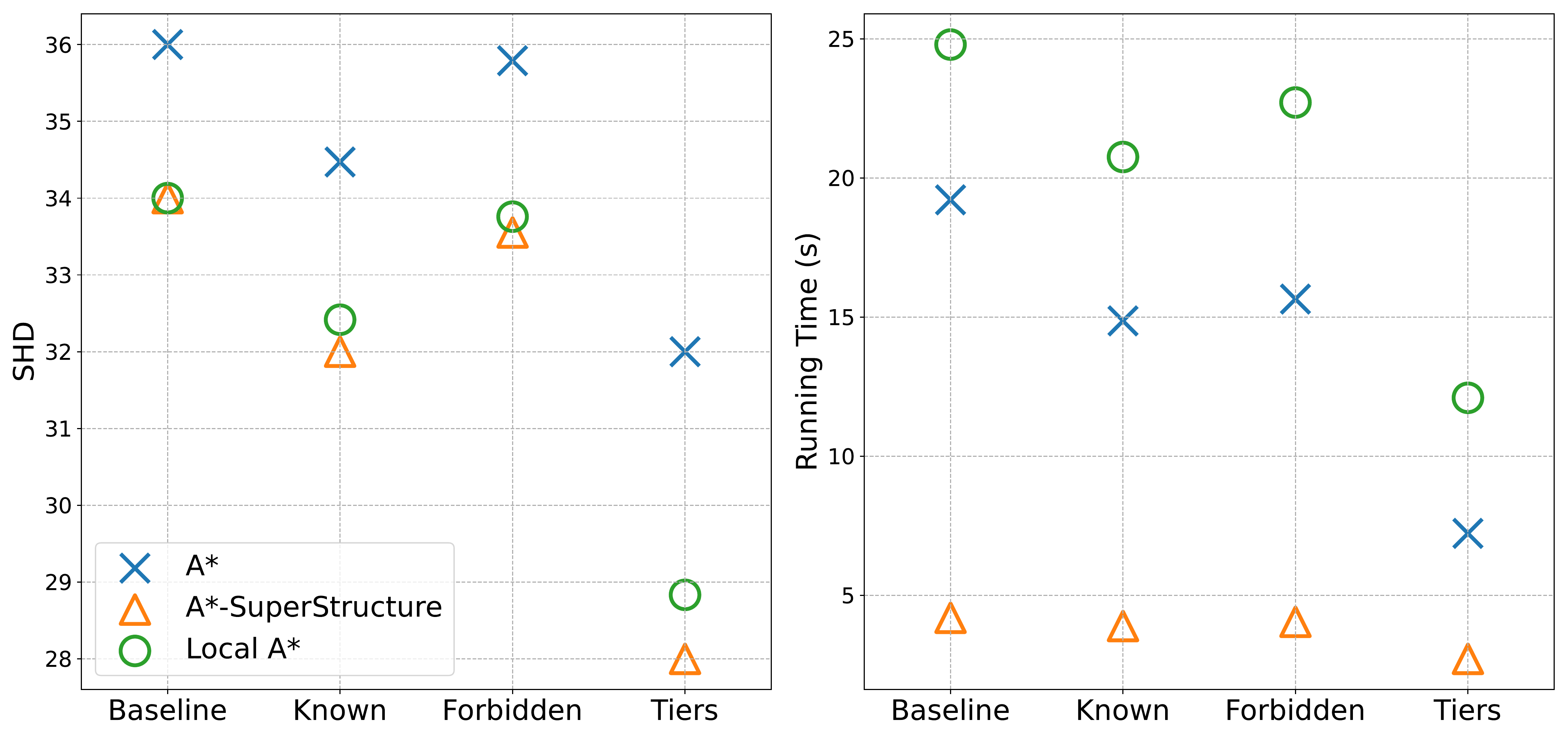}
    \caption{Results for applying {\astar}-based causal discovery on the Sachs data set for different types of domain knowledge. 
    }
    \label{fig:sachs_results}
\end{figure}

We here consider the real protein signalling data set of~\citet{sachs2005}, which describes protein interactions in human cells. The data set contains $11$ continuous variables with $7{,}466$ observations and, importantly, has a known ground-truth DAG that was formed by experts with the help of interventional data (see the supplementary material).

Similar to before, we are concerned with analyzing the impact of integrating different types of domain knowledge with {\astar}-based causal discovery, specifically that of known edges, forbidden edges and tiers. To this end, we first generate all possible combinations of such domain knowledge, as given by the known ground-truth. For incorporating known edges, this implies running {\astar}-based causal discovery for every possible edge that could be inserted ($17$ in total). Similarly, for incorporating forbidden edges, this means running causal discovery for all non-edges (of which there are $38$). There are $2$ source variables and $4$ sink variables in the ground-truth DAG, resulting in $8$ different sets of tiers.

In Figure~\ref{fig:sachs_results} we summarize the results of integrating these sets of domain knowledge with regular {\astar}, {\astar}-SuperStructure and Local {\astar}. The left plot shows how the SHD is affected by considering different types of domain knowledge and different {\astar}-based methods. All types of domain knowledge generally improve SHD, with tiers having the largest impact, followed by known edges and then forbidden edges, which is similar to results of the synthetic data set experiments. Notably, {\astar}-SuperStructure appears to be affected most by the insertion of domain knowledge and also results in the largest SHD improvements. Even though both {\astar}-SuperStructure and Local {\astar} rely on graphical LASSO to compute a super-structure, the latter does not seem to be affected as much as the former. 

Finally, the right plot in Figure~\ref{fig:sachs_results} shows absolute running times for {\astar}-based causal discovery when incorporating different types of domain knowledge. Regular {\astar} seems to be affected the most by the insertion of domain knowledge, whereas {\astar}-SuperStructure does not seem to be affected much at all. One of the reasons for this observation may be the potentially large overhead due to the super-structure computation, which was already observed for relatively small synthetic data sets (see Figure~\ref{fig:sachs_results}). While Local {\astar} appears to be the slowest method, we note that this method
has the potential to be parallelized over several CPUs~\citep{ng2021}.
Overall, every method appears to be positively affected by domain knowledge, with respect to both SHD and running time.

\section{Conclusions}

In this work we provided a sound method for integrating different types of domain knowledge with {\astar}-based causal discovery methods. In doing so, we discussed how domain knowledge can affect each step of the causal discovery process and offered a theoretical analysis of the potential computational gains, as measured by a reduction in the number of score function computations. We supported our findings with synthetic and real data, finding that domain knowledge can help improve recovery of the ground-truth and computational running time in nearly every setting. Furthermore, we showed that various extensions, such as {\astar}-SuperStructure and Local {\astar}, can show significantly different responses to the integration of domain knowledge than regular {\astar}.

One of the current main limitations of {\astar}-based causal discovery is that it is only guaranteed to converge to the global optimum in the linear and Gaussian setting. In the future, it may be interesting to investigate, theoretically and empirically, how the integration of domain knowledge affects the non-linear and non-Gaussian setting, perhaps even considering discrete data.

\bibliography{aaai}

\begin{thebibliography}{28}
\providecommand{\natexlab}[1]{#1}

\bibitem[{Andrews, Spirtes, and Cooper(2020)}]{andrews2020}
Andrews, B.; Spirtes, P.; and Cooper, G.~F. 2020.
\newblock On the Completeness of Causal Discovery in the Presence of Latent
  Confounding with Tiered Background Knowledge.
\newblock In \emph{Proceedings of the Twenty Third International Conference on
  Artificial Intelligence and Statistics}, volume 108 of \emph{Proceedings of
  Machine Learning Research}, 4002--4011. PMLR.

\bibitem[{Chickering(2003)}]{chickering2003}
Chickering, D.~M. 2003.
\newblock Optimal Structure Identification with Greedy Search.
\newblock \emph{J. Mach. Learn. Res.}, 3(0): 507–554.

\bibitem[{Chickering(2020)}]{chickering2020}
Chickering, D.~M. 2020.
\newblock Statistically Efficient Greedy Equivalence Search.
\newblock In \emph{Proceedings of the 36th Conference on Uncertainty in
  Artificial Intelligence (UAI)}, volume 124 of \emph{Proceedings of Machine
  Learning Research}, 241--249. PMLR.

\bibitem[{{Constantinou}, {Guo}, and {Kitson}(2021)}]{constantinou2021}
{Constantinou}, A.~C.; {Guo}, Z.; and {Kitson}, N.~K. 2021.
\newblock {The impact of prior knowledge on causal structure learning}.
\newblock \emph{arXiv e-prints}, arXiv:2102.00473.

\bibitem[{Ebert-Uphoff and Deng(2014)}]{ebert2014}
Ebert-Uphoff, I.; and Deng, Y. 2014.
\newblock Causal Discovery from Spatio-Temporal Data with Applications to
  Climate Science.
\newblock In \emph{13th International Conference on Machine Learning and
  Applications}, 606--613.

\bibitem[{Forster et~al.(2018)Forster, Raskutti, Stern, and
  Weinberger}]{forster2018}
Forster, M.; Raskutti, G.; Stern, R.; and Weinberger, N. 2018.
\newblock The Frugal Inference of Causal Relations.
\newblock \emph{The British Journal for the Philosophy of Science}, 69(3):
  821--848.

\bibitem[{Friedman, Hastie, and Tibshirani(2008)}]{friedman2008}
Friedman, J.~H.; Hastie, T.~J.; and Tibshirani, R. 2008.
\newblock Sparse inverse covariance estimation with the graphical lasso.
\newblock \emph{Biostatistics}, 9(3): 432--41.

\bibitem[{Geiger and Heckerman(1994)}]{geiger1994}
Geiger, D.; and Heckerman, D. 1994.
\newblock Learning Gaussian Networks.
\newblock In \emph{UAI'94}.

\bibitem[{Glymour, Zhang, and Spirtes(2019)}]{glymour2019}
Glymour, C.; Zhang, K.; and Spirtes, P. 2019.
\newblock Review of Causal Discovery Methods Based on Graphical Models.
\newblock \emph{Frontiers in Genetics}, 10.

\bibitem[{Hart, Nilsson, and Raphael(1968)}]{hart1968}
Hart, P.~E.; Nilsson, N.~J.; and Raphael, B. 1968.
\newblock A Formal Basis for the Heuristic Determination of Minimum Cost Paths.
\newblock \emph{IEEE Transactions on Systems Science and Cybernetics}, 4(2):
  100--107.

\bibitem[{Heckerman, Geiger, and Chickering(1995)}]{heckerman1995}
Heckerman, D.; Geiger, D.; and Chickering, D. 1995.
\newblock Learning Bayesian Networks: The Combination of Knowledge and
  Statistical Data.
\newblock \emph{Machine Learning}, 20: 197--243.

\bibitem[{Lu, Zhang, and Yuan(2021)}]{lu2021}
Lu, N.~Y.; Zhang, K.; and Yuan, C. 2021.
\newblock Improving Causal Discovery By Optimal Bayesian Network Learning.
\newblock \emph{Proceedings of the AAAI Conference on Artificial Intelligence},
  35(10): 8741--8748.

\bibitem[{Mani and Cooper(2000)}]{mani2000}
Mani, S.; and Cooper, G.~F. 2000.
\newblock Causal discovery from medical textual data.
\newblock \emph{Proceedings. AMIA Symposium}, 542--6.

\bibitem[{Meek(1995)}]{meek1995}
Meek, C. 1995.
\newblock Causal Inference and Causal Explanation with Background Knowledge.
\newblock In \emph{Proceedings of the Eleventh Conference on Uncertainty in
  Artificial Intelligence}, UAI'95, 403–410.

\bibitem[{Ng et~al.(2021)Ng, Zheng, Zhang, and Zhang}]{ng2021}
Ng, I.; Zheng, Y.; Zhang, J.; and Zhang, K. 2021.
\newblock Reliable Causal Discovery with Improved Exact Search and Weaker
  Assumptions.
\newblock In \emph{NeurIPS 2021}.

\bibitem[{Orriols-Puig, Casillas, and
  Mart{\'i}nez-L{\'o}pez(2010)}]{Orriols2010}
Orriols-Puig, A.; Casillas, J.; and Mart{\'i}nez-L{\'o}pez, F.~J. 2010.
\newblock Automatic Discovery of Potential Causal Structures in Marketing
  Databases Based on Fuzzy Association Rules.
\newblock In \emph{Marketing Intelligent Systems Using Soft Computing:
  Managerial and Research Applications}, 181--206. Springer Berlin Heidelberg.

\bibitem[{Pearl(2009)}]{pearl2009}
Pearl, J. 2009.
\newblock \emph{Causality}.
\newblock Cambridge University Press, 2 edition.

\bibitem[{Ramsey et~al.(2017)Ramsey, Glymour, Sanchez-Romero, and
  Glymour}]{ramsey2017}
Ramsey, J.; Glymour, M.; Sanchez-Romero, R.; and Glymour, C. 2017.
\newblock A million variables and more: the Fast Greedy Equivalence Search
  algorithm for learning high-dimensional graphical causal models, with an
  application to functional magnetic resonance images.
\newblock \emph{International Journal of Data Science and Analytics}, 3.

\bibitem[{Sachs et~al.(2005)Sachs, Perez, Pe'er, Lauffenburger, and
  Nolan}]{sachs2005}
Sachs, K.; Perez, O.; Pe'er, D.; Lauffenburger, D.~A.; and Nolan, G.~P. 2005.
\newblock Causal Protein-Signaling Networks Derived from Multiparameter
  Single-Cell Data.
\newblock \emph{Science}, 308(5721): 523--529.

\bibitem[{Scheines et~al.(1998)Scheines, Spirtes, Glymour, Meek, and
  Richardson}]{scheines1998}
Scheines, R.; Spirtes, P.~L.; Glymour, C.; Meek, C.; and Richardson, T.~S.
  1998.
\newblock The TETRAD Project: Constraint Based Aids to Causal Model
  Specification.
\newblock \emph{Multivariate behavioral research}, 33 1: 65--117.

\bibitem[{Schwarz(1978)}]{schwarz1978}
Schwarz, G. 1978.
\newblock {Estimating the Dimension of a Model}.
\newblock \emph{The Annals of Statistics}, 6(2): 461 -- 464.

\bibitem[{Spirtes, Glymour, and Scheines(1993)}]{spirtes1993}
Spirtes, P.; Glymour, C.; and Scheines, R. 1993.
\newblock \emph{Causation, Prediction and Search}.
\newblock Springer-Verlag.

\bibitem[{Spirtes, Meek, and Richardson(1995)}]{spirtes1995}
Spirtes, P.; Meek, C.; and Richardson, T. 1995.
\newblock Causal Inference in the Presence of Latent Variables and Selection
  Bias.
\newblock In \emph{Proceedings of the Eleventh Conference on Uncertainty in
  Artificial Intelligence}, UAI'95, 499–506.

\bibitem[{Triantafillou et~al.(2017)Triantafillou, Lagani, Heinze-Deml,
  Schmidt, Tegner, and Tsamardinos}]{triantafillou2017}
Triantafillou, S.; Lagani, V.; Heinze-Deml, C.; Schmidt, A.; Tegner, J.; and
  Tsamardinos, I. 2017.
\newblock Predicting Causal Relationships from Biological Data: Applying
  Automated Causal Discovery on Mass Cytometry Data of Human Immune Cells.
\newblock \emph{Scientific Reports}, 7.

\bibitem[{Tsamardinos, Aliferis, and Statnikov(2003)}]{tsamardinos2003}
Tsamardinos, I.; Aliferis, C.~F.; and Statnikov, A. 2003.
\newblock Time and Sample Efficient Discovery of Markov Blankets and Direct
  Causal Relations.
\newblock In \emph{Proceedings of the Ninth ACM SIGKDD International Conference
  on Knowledge Discovery and Data Mining}, KDD '03, 673–678.

\bibitem[{Uhler et~al.(2013)Uhler, Raskutti, Bühlmann, and Yu}]{uhler2013}
Uhler, C.; Raskutti, G.; Bühlmann, P.; and Yu, B. 2013.
\newblock {Geometry of the faithfulness assumption in causal inference}.
\newblock \emph{The Annals of Statistics}, 41(2): 436 -- 463.

\bibitem[{Yuan and Malone(2013)}]{yuan2013}
Yuan, C.; and Malone, B. 2013.
\newblock Learning Optimal Bayesian Networks: A Shortest Path Perspective.
\newblock \emph{J. Artif. Int. Res.}, 48(1): 23–65.

\bibitem[{Zhang(2008)}]{zhang2008}
Zhang, J. 2008.
\newblock On the completeness of orientation rules for causal discovery in the
  presence of latent confounders and selection bias.
\newblock \emph{Artificial Intelligence}, 172(16): 1873--1896.

\end{thebibliography}

\clearpage

\appendix
\newcommand{\appendixhead}%
{\centering\textbf{\LARGE Technical Appendix for Domain Knowledge in A*-Based Causal Discovery}
\vspace{0.25in}}
\twocolumn[\appendixhead]

\section{Additional theoretical contributions}

\subsection{Score computations in a full parent graph}

\begin{lemma} \label{lemma:app_pa}
A full parent graph $\mathcal{G}_{\text{PA}}$ has $2^{(p-1)}$ nodes, each representing a different potential parent set. Given a data set containing $p$ variables, \astar-based causal discovery requires $p2^{(p-1)}$ score function evaluations in the worst case.
\end{lemma}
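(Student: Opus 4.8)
The plan is a straightforward counting argument in two parts: first count the nodes of a single parent graph, then account for the per-node score evaluations across all $p$ parent graphs.

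First I would fix a node $x_i \in \mathbf{V}$ and recall from the construction of parent graphs that the candidate parent sets for $x_i$ are exactly the subsets of $\mathbf{V} \backslash x_i$, and that $\mathcal{G}_{\text{PA}}(x_i)$ is the Hasse diagram whose nodes are precisely these subsets. Since $|\mathbf{V} \backslash x_i| = p-1$, the number of such subsets is $2^{p-1}$, establishing that a full parent graph has $2^{(p-1)}$ nodes, each a distinct potential parent set (ranging from $\varnothing$ at the top to $\mathbf{V}\backslash x_i$ at the bottom).

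Next I would argue the worst-case count. Each node of $\mathcal{G}_{\text{PA}}(x_i)$ carries a score, namely the BIC score of the corresponding parent set, and evaluating this score is one score function evaluation; distinct parent sets are scored independently. In the worst case the {\astar} heuristic yields no pruning and the path-finding step visits (and hence must have scored) every node of the parent graph, so the number of score evaluations needed for $\mathcal{G}_{\text{PA}}(x_i)$ is at most $2^{p-1}$, and this bound is attained in the worst case. Since {\astar}-based causal discovery builds one parent graph for each of the $p$ nodes $x_1,\dots,x_d$ (with $d=p$) and the parent graphs are scored independently, summing over the $p$ nodes gives $p \cdot 2^{p-1}$ score function evaluations in the worst case, as claimed.

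The only real subtlety — and the step I would be most careful about — is the phrase ``in the worst case'': I would make explicit that although {\astar}'s heuristic and the early-termination behavior noted by~\citet{yuan2013} can cause many nodes never to be expanded, there exist score configurations for which no such pruning occurs, so the full graph is explored and the bound $2^{p-1}$ per parent graph is tight; everything else is elementary counting.
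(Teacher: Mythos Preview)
Your proposal is correct and follows essentially the same elementary counting argument as the paper. The only cosmetic difference is that the paper counts parent-graph nodes level by level via $\sum_{l=0}^{p-1} C(p-1,l) = 2^{p-1}$ and invokes the binomial theorem, whereas you appeal directly to the power-set cardinality of $\mathbf{V}\backslash x_i$; your added justification for the ``worst case'' wording is more explicit than anything in the paper's proof, but harmless.
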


\begin{proof}
Consider a set of $p$ variables $X = \{x_1, \dots, x_p\}$ and the full parent graph $\mathcal{G}_{Pa}(x_i)$ of variable $x_i$, which is an ordered graph of all potential parent sets of $x_i$ in a causal graph. Let us define a level $l$ within the parent graph where each potential parent set has exactly $l$ elements. By definition, the full parent graph $\mathcal{G}_{Pa}(x_i)$ contains potential parent sets with an increasing number of elements, starting from the empty set $\varnothing$ (top level) to all other variables $X \backslash x_i$ (bottom level), which means that $\mathcal{G}_{Pa}(x_i)$ contains $p$ levels $l \in \{0, \dots, p-1\}$. In order to find out how many potential parent sets correspond to each level $l$, we need to count how many potential parent sets can been chosen from $X \backslash x_i$ such that they contain $l$ elements. This number can be obtained exactly by applying the choose operator $C(p-1, l)$. We can then compute how many nodes a full parent graph $\mathcal{G}_{Pa}(x_i)$ contains by summing over all levels $l$, i.e.
\begin{equation}
    |\mathcal{G}_{\text{PA}}(x_i)| = \sum_{l=0}^{p-1} C(p-1, l) = 2^{p-1},
\end{equation}
where the second step is a direct consequence of the well-known binomial theorem. Since we have $p$ of such full parent graphs, the number of score function evaluations in {\astar}-based causal discovery is precisely $p2^{p-1}$.
\end{proof}

\subsection{Computational gains for known edges}

\begin{theorem}
Given a set of full parent graphs, incorporating the domain knowledge of one directed edge reduces the maximum number of score function computations in regular {\astar}-based causal discovery from $p2^{p-1}$ to $(p-1)2^{p-1}$, where $p$ is the total number of nodes.
\end{theorem}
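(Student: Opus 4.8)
The plan is a direct counting argument at the level of individual parent graphs, exploiting the fact that a single directed edge $x_i \rightarrow x_j$ touches only two of the $p$ parent graphs. First I would recall from Lemma~\ref{lemma:app_pa} that, absent any domain knowledge, each of the $p$ full parent graphs contributes $2^{p-1}$ score evaluations, for a worst-case total of $p2^{p-1}$. The edge $x_i \rightarrow x_j$ imposes exactly two constraints on parent sets: $x_i \in \text{Pa}_{\mathcal{G}}(x_j)$, which prunes $\mathcal{G}_{Pa}(x_j)$, and $x_j \notin \text{Pa}_{\mathcal{G}}(x_i)$, which prunes $\mathcal{G}_{Pa}(x_i)$. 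Since $x_i \neq x_j$, these are two distinct graphs, and the remaining $p-2$ parent graphs are untouched and still contribute $2^{p-1}$ each.

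The core step is to count the surviving nodes in the two pruned graphs. In $\mathcal{G}_{Pa}(x_j)$ the survivors are precisely the subsets of $\mathbf{V}\setminus x_j$ that contain $x_i$; sending such a set $S$ to $S\setminus x_i$ gives a bijection with the subsets of $\mathbf{V}\setminus\{x_i,x_j\}$, of which there are $2^{p-2}$. In $\mathcal{G}_{Pa}(x_i)$ the survivors are the subsets of $\mathbf{V}\setminus x_i$ that avoid $x_j$, i.e.\ exactly the subsets of $\mathbf{V}\setminus\{x_i,x_j\}$, again $2^{p-2}$ of them. (Equivalently, one can re-run the level-by-level binomial count from the proof of the lemma with $p-1$ replaced by $p-2$.) Summing the contributions gives
\begin{equation}
(p-2)2^{p-1} + 2^{p-2} + 2^{p-2} = (p-2)2^{p-1} + 2^{p-1} = (p-1)2^{p-1},
\end{equation}
which is the claimed bound.

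There is no real obstacle here beyond bookkeeping, but two points deserve care. First, I should make explicit, as in the lemma, that "maximum number of score function computations" refers to the worst case in which every surviving node of every parent graph is scored; A$^\ast$ may skip nodes via its heuristic, and pruning only shrinks this set, so the worst-case bound is what transfers. Second, I should note that the two constraints do not cascade: a known edge $x_i \rightarrow x_j$ says nothing forbidden or required about any third variable's parent set, so the count of $p-2$ entirely unaffected parent graphs is exact. With these observations in place, the arithmetic above closes the argument.
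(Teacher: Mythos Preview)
Your proposal is correct and follows essentially the same approach as the paper: you split off the two affected parent graphs, count the survivors in each as $2^{p-2}$, leave the other $p-2$ graphs unchanged, and sum. The only cosmetic difference is that you count the subsets containing $x_i$ via the bijection $S\mapsto S\setminus x_i$, whereas the paper does the equivalent level-by-level binomial sum $\sum_{l=1}^{p-1}\binom{p-2}{l-1}=2^{p-2}$; your additional remarks on the worst-case interpretation and the non-cascading of constraints are sound and make the argument slightly more explicit than the paper's version.
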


\begin{proof}
Let us insert a known edge from variable $x_i$ to variable $x_j$, where $i, j \in \{1, \dots, p\}$ and $i\neq j$. Since $x_i$ is assured to be a parent of $x_j$, we do not need to compute scores for nodes, i.e.~potential parent sets, in the parent graph of $x_i$ and $x_j$ that violate this known causal relationship.

First, consider the full parent graph $\mathcal{G}_{Pa}(x_i)$ of $x_i$. By incorporating the domain knowledge that $x_i \rightarrow x_j$, we know that $x_j$ cannot be a parent of $x_i$ and, therefore, we can remove all nodes in the parent graph of $x_i$ that contain $x_j$. This changes the number of potential parents of $x_i$ from $p-1$ to $p-2$, which, according to Lemma~\ref{lemma:app_pa}, reduces the number of score computations in this parent graph from $2^{p-1}$ to $2^{p-2}$.

Second, consider the parent graph $\mathcal{G}_{Pa}(x_j)$ of $x_j$ in a similar manner. By incorporating the domain knowledge that $x_i \rightarrow x_j$, we know that $x_i$ has to be in the optimal parent set of $x_j$. This means that we can prune all nodes in this parent graph that do not contain $x_i$. In order to deduce the reduction in score computations, recall from Lemma~\ref{lemma:app_pa} that the number of nodes in the full parent graph $\mathcal{G}_{Pa}(x_j)$, without incorporating domain knowledge, is given by
\begin{equation}
|\mathcal{G}_{\text{PA}}(x_j)| = \sum_{l=0}^{p-1} C(p-1, l) = 2^{p-1},
\end{equation}
where $C(p-1, l)$ denotes the number of combinations that we can choose $l$ parents out of $p-1$ potential parents. Incorporating the known edge $x_i \rightarrow x_j$ implies that the parent set of $x_j$ cannot be empty, which means that the $l=0$ term can be dropped from the above sum. Furthermore, since $x_i$ is a required parent it needs to be included in every potential parent set, which means that we only need to choose $l-1$ other parents from a total pool of $p-2$ parents for every term in the above sum. The variable $x_i$ can then simply be added to every chosen parent set to construct a valid parent set. The number of score computations for this pruned parent graph $\mathcal{G}_{Pa}(x_j)$ is therefore given by
\begin{align}
|\mathcal{G}_{\text{PA}}(x_j)| 
&=\sum_{l=1}^{p-1} C(p-2, l-1) \\
&=\sum_{l^{\prime}=1}^{p-2} C(p-2, l^{\prime}) \\
&= 2^{p-2},
\end{align}
where made a change of variables $l^{\prime} = l-1$ in the second line and applied Lemma~\ref{lemma:app_pa} in the third line.

Therefore, the total number of score function evaluations across all pruned parent graphs is
\begin{equation}
    2^{p-2} + (p-2) 2^{p-1} + 2^{p-2} = (p-1) 2^{p-1},
\end{equation}
where the first and third term are due to the parent graphs of $x_i$ and $x_j$, respectively, and the second term is due to the parent graphs of all the other nodes $X \backslash \{x_i, x_j\}$, which remain unchanged.
\end{proof}

\subsection{Computational gains for forbidden edges}

\begin{theorem}
Given a set of full parent graphs, incorporating the domain knowledge of one forbidden edge reduces the maximum number of score function computations in regular {\astar}-based causal discovery from $p2^{p-1}$ to $(p-1)2^{p-1}$, where $p$ is the total number of nodes.
\end{theorem}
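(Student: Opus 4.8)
The plan is to mirror the argument used for known edges, but to observe that a forbidden edge prunes \emph{both} affected parent graphs in the same symmetric way, rather than in the two different ways (``must contain $x_i$'' versus ``must not contain $x_j$'') that arose in Theorem~\ref{theorem:known}. So first I would fix the forbidden edge as being between $x_i$ and $x_j$ with $i \neq j$, and recall from the definition that this means $x_i \notin \text{Pa}_{\mathcal{G}}(x_j)$ and $x_j \notin \text{Pa}_{\mathcal{G}}(x_i)$ simultaneously, so only the two parent graphs $\mathcal{G}_{Pa}(x_i)$ and $\mathcal{G}_{Pa}(x_j)$ are affected; all other $p-2$ parent graphs are untouched and each still contributes $2^{p-1}$ score evaluations by Lemma~\ref{lemma:app_pa}.

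Next I would count the survivors in $\mathcal{G}_{Pa}(x_i)$. Since $x_j$ can no longer be a potential parent of $x_i$, every node (potential parent set) that contains $x_j$ is removed, and what remains is exactly the full parent graph built from the reduced pool $X \backslash \{x_i, x_j\}$ of size $p-2$. Applying Lemma~\ref{lemma:app_pa} with $p-2$ potential parents gives $2^{p-2}$ remaining nodes. By the identical argument with the roles of $i$ and $j$ swapped, $\mathcal{G}_{Pa}(x_j)$ also retains exactly $2^{p-2}$ nodes. (If one wants to keep the calculation in the same $\sum_l C(\cdot,\cdot)$ form as the known-edge proof, the survivors at level $l$ of $\mathcal{G}_{Pa}(x_i)$ number $C(p-2, l)$, and $\sum_{l=0}^{p-2} C(p-2,l) = 2^{p-2}$ by the binomial theorem.)

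Finally I would add the three contributions: $2^{p-2}$ from $\mathcal{G}_{Pa}(x_i)$, $2^{p-2}$ from $\mathcal{G}_{Pa}(x_j)$, and $(p-2)2^{p-1}$ from the remaining parent graphs, obtaining $2^{p-2} + 2^{p-2} + (p-2)2^{p-1} = 2^{p-1} + (p-2)2^{p-1} = (p-1)2^{p-1}$, as claimed, down from the worst-case total $p2^{p-1}$ of Lemma~\ref{lemma:app_pa}. There is no real obstacle here; the only point that needs a sentence of care is noting that a forbidden edge, unlike a known edge, keeps the empty set $\varnothing$ (and more generally every subset avoiding the other endpoint) as a legal parent set in \emph{both} graphs, so both reductions are ``remove all supersets of a single forbidden element'' rather than the asymmetric pruning of Theorem~\ref{theorem:known} — yet the final count coincides, which is exactly the ``interesting'' coincidence the surrounding text highlights.
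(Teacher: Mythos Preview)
Your proposal is correct and matches the paper's proof essentially step for step: fix the forbidden edge, observe that each of the two affected parent graphs loses exactly the subsets containing the other endpoint and hence shrinks from $2^{p-1}$ to $2^{p-2}$ by Lemma~\ref{lemma:app_pa}, leave the remaining $p-2$ parent graphs untouched, and sum. The extra remarks you add (the binomial-sum alternative and the contrast with the asymmetric pruning in Theorem~\ref{theorem:known}) are helpful commentary but not part of the logical core, which is identical to the paper's argument.
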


\begin{proof}
Let us insert a forbidden edge between variable $x_i$ and variable $x_j$, where $i, j \in \{1, \dots, p\}$ and $i\neq j$. Since neither variable can be in the parent set of the other variable, we do not need to compute scores for nodes, i.e.~potential parent sets, in the parent graph of $x_i$ and $x_j$ that violate this known causal relationship.

First, consider the full parent graph $\mathcal{G}_{Pa}(x_i)$ of $x_i$. Since we know that $x_j$ cannot be a parent of $x_i$, we can remove all nodes in the parent graph of $x_i$ that contain $x_j$. This changes the number of potential parents of $x_i$ from $p-1$ to $p-2$, which, according to Lemma~\ref{lemma:app_pa}, reduces the number of score computations in this parent graph from $2^{p-1}$ to $2^{p-2}$. The exact same argument then applies to the parent graph $\mathcal{G}_{Pa}(x_j)$ of $x_j$, which can be pruned such that it does not contain any potential parent sets that involve $x_i$. This similarly reduces the number of score computations from $2^{p-1}$ to $2^{p-2}$.

Therefore, the total number of score function evaluations across all pruned parent graphs is
\begin{equation}
    2^{p-2} + (p-2) 2^{p-1} + 2^{p-2} = (p-1) 2^{p-1},
\end{equation}
where the first and third term are due to the parent graphs of $x_i$ and $x_j$, respectively, and the second term is due to the parent graphs of all the other nodes $X \backslash \{x_i, x_j\}$, which remain unchanged.
\end{proof}

\subsection{Computational gains for tiers with one source and one sink variable}

\begin{theorem}
Given a set of full parent graphs, incorporating tiered domain knowledge that includes exactly one source variable and one sink variable, resulting in three tiers, halves the maximum number of score function computations in regular {\astar}-based causal discovery.
\end{theorem}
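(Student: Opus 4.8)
The plan is to decompose the total count of score-function evaluations tier by tier and apply Lemma~\ref{lemma:app_pa} to each pruned parent graph in isolation. First I would fix notation: let $x_s$ denote the single source variable (tier $\mathbf{T}_1$), let $x_t$ denote the single sink variable (tier $\mathbf{T}_3$), and let the remaining $p-2$ variables constitute the middle tier $\mathbf{T}_2$. The tier-pruning rule established in the methodology section states that a variable in tier $\mathbf{T}_k$ cannot have a parent lying in any strictly later tier $\mathbf{T}_{>k}$, so I only need to track, for each variable, how many of the other $p-1$ variables survive as potential parents.

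Next I would count the evaluations tier by tier. For the source $x_s$, every other variable lies in $\mathbf{T}_2$ or $\mathbf{T}_3$, hence in a strictly later tier, so $\mathcal{G}_{Pa}(x_s)$ is pruned all the way down to the single node $\varnothing$, contributing $1 = 2^{0}$ evaluation. For the sink $x_t$, no variable lies in a later tier, so $\mathcal{G}_{Pa}(x_t)$ is the full parent graph and contributes $2^{p-1}$ evaluations by Lemma~\ref{lemma:app_pa}. For each middle variable $x_i \in \mathbf{T}_2$, only $x_t$ lies in a later tier, so $x_t$ is removed from its pool of potential parents, leaving $p-2$ potential parents; the counting argument of Lemma~\ref{lemma:app_pa} (which depends only on the number $m$ of potential parents, summing $C(m,l)$ over $l$ to get $2^{m}$ nodes) applies verbatim with $m=p-2$, giving $2^{p-2}$ evaluations per middle variable, and there are $p-2$ of them.

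Finally I would sum the three contributions,
\begin{equation}
1 + 2^{p-1} + (p-2)\,2^{p-2} = 1 + 2\cdot 2^{p-2} + (p-2)\,2^{p-2} = 1 + p\,2^{p-2},
\end{equation}
and compare this to the unrestricted count $p\,2^{p-1} = 2\,p\,2^{p-2}$ from Lemma~\ref{lemma:app_pa}. The new total is exactly half of the original up to the additive constant $1$ coming from the (trivial) empty-parent-set score retained for the source, so the maximum number of score computations is halved in the stated sense.

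I do not expect a deep obstacle here; the work is bookkeeping. The one point requiring care is the re-application of Lemma~\ref{lemma:app_pa} to a parent graph whose variable has had its pool of potential parents shrunk from $p-1$ to $p-2$ — one must make explicit that the lemma's count is a function of the number of \emph{available} parents, not of $p$ itself, so that $m=p-2$ yields $2^{p-2}$ nodes. The only other wrinkle is the convention for the pruned source parent graph: whether one counts its lone empty-parent-set node as $1$ evaluation or $0$; either choice leaves the leading-order statement ``halves'' intact, and I would report the exact total as $p\,2^{p-2} + 1$ (or $p\,2^{p-2}$ if the empty-set score is not counted).
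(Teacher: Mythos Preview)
Your proposal is correct and follows essentially the same tier-by-tier decomposition as the paper's own proof. The only difference is a convention you already flag: the paper assigns $0$ score evaluations to the source variable (arguing no computation is needed once its parent set is known to be empty), which makes the halving exact at $p\,2^{p-2}=\tfrac{p}{2}2^{p-1}$ rather than $p\,2^{p-2}+1$.
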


\begin{proof}
Consider the tiered domain knowledge $\mathbf{T} = \{\mathbf{T}_1, \mathbf{T}_2, \mathbf{T}_3\}$, where $\mathbf{T}_1 = \{x_i\}$ contains a single source variable $x_i$, $\mathbf{T}_3 = \{x_j\}$ contains a single sink variable $x_j$ and $\mathbf{T}_2 = \{x_1, \dots, x_p\} \backslash \{x_i, x_j\}$ consists of all other variable except the source and sink variables. Here, the tiers $\mathbf{T}$ specifies that directed edges can never go from elements in $\mathbf{T}_3$ to elements in either $\mathbf{T}_1$ or $\mathbf{T}_2$, or from elements in $\mathbf{T}_2$ to elements in $\mathbf{T}_1$.
We can deduce the reduction in the maximum number of score function computations by pruning the set of parent graphs such that they conform with $\mathbf{T}$.

Consider the full parent graph $\mathcal{G}_{Pa}(x_i)$ of the source variable $x_i$. Since $x_i$ is contained in the first tier $\mathbf{T}_1$, the domain knowledge specifies that it has no parents and therefore we do not need to perform any score computations in its parent graph at all. In contrast, the full parent graph $\mathcal{G}_{Pa}(x_j)$ of the sink variable $x_j$ remains unchanged, since it is contained in the last tier $\mathbf{T}_3$, which means that its maximum number of score computations is still $2^{p-1}$.

Let us now consider the full parent graph $\mathcal{G}_{Pa}(x_k)$ of any variable $x_k \in \mathbf{T}_2$. Since there cannot be directed edges from the sink variable $x_j \in \mathbf{T}_3$ to $x_k$, we need to prune all nodes in the full parent graph that contain the variable $x_j$. This reduces the number of potential parents from $p-1$ to $p-2$, which, according to Lemma~\ref{lemma:app_pa}, means that the parent graph of $x_k$ has $2^{p-2}$ nodes. Since there are $p-2$ variables in the second tier $\mathbf{T}_2$, this results in a maximum of $(p-2)2^{p-2}$ score computations.

Adding the number of score computations for all full parent graphs yields $(p-2)2^{p-2} + 2^{p-1} = \frac{p}{2}2^{p-1}$ number of score computations, which corresponds to an exact reduction by half.
\end{proof}

\subsection{Computational gains for arbitrary tiers}

In this section we provide an extension of the above theorem to an arbitrary number of tiers with arbitrary sizes.

\begin{theorem}
Given a set of full parent graphs, incorporating tiered domain knowledge that includes $n$ tiers $\{\mathbf{T}_1, \dots, \mathbf{T}_n\}$ reduces the maximum number of score function computations in regular {\astar}-based causal discovery from $p2^{p-1}$ to the following:
\begin{equation}
    \sum_{k=1}^{n-1} |\mathbf{T}_k| 2^{p - 1 - \sum_{l=k+1}^n |\mathbf{T}_l|} + |\mathbf{T}_n| 2^{p - 1},
\end{equation}
where $|\mathbf{T}_k|$ is the number of variables in tier $k$.
\end{theorem}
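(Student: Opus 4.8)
The plan is to generalize the three-tier argument to $n$ tiers by computing, for each variable $x_k$, how many potential parents survive the tiered pruning, and then summing $2^{(\text{surviving parents})}$ over all variables, grouped by tier. The key observation is that a variable $x_k \in \mathbf{T}_m$ cannot have any parent drawn from the strictly later tiers $\mathbf{T}_{m+1}, \dots, \mathbf{T}_n$, while every variable in $\mathbf{T}_1, \dots, \mathbf{T}_m$ (including $x_k$'s own tier, minus $x_k$ itself) remains an allowed potential parent. Hence the number of allowed potential parents of $x_k$ is $(p-1) - \sum_{l=m+1}^n |\mathbf{T}_l|$, and by Lemma~\ref{lemma:app_pa} the pruned parent graph $\mathcal{G}_{Pa}(x_k)$ contains exactly $2^{p-1-\sum_{l=m+1}^n |\mathbf{T}_l|}$ nodes.

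Next I would sum this over all $p$ variables by organizing the sum over tiers. For a fixed tier index $k < n$, each of the $|\mathbf{T}_k|$ variables contributes $2^{p-1-\sum_{l=k+1}^n |\mathbf{T}_l|}$ score computations, giving the term $|\mathbf{T}_k| \, 2^{p-1-\sum_{l=k+1}^n |\mathbf{T}_l|}$. For the final tier $k=n$, there are no later tiers, so the inner exponent sum is empty (equals $0$) and each of the $|\mathbf{T}_n|$ variables has its full parent graph with $2^{p-1}$ nodes, contributing $|\mathbf{T}_n| \, 2^{p-1}$. Adding these yields exactly the claimed expression
\begin{equation}
\sum_{k=1}^{n-1} |\mathbf{T}_k| \, 2^{p-1-\sum_{l=k+1}^n |\mathbf{T}_l|} + |\mathbf{T}_n| \, 2^{p-1}.
\end{equation}
It also remains to observe consistency: this formula correctly recovers $p 2^{p-1}$ when $n=1$ (the sum is empty, $|\mathbf{T}_1| = p$), and recovers the three-tier result $\tfrac{p}{2} 2^{p-1}$ when $\mathbf{T}_1 = \{x_i\}$, $\mathbf{T}_2$ has $p-2$ elements, $\mathbf{T}_3 = \{x_j\}$: the $k=1$ term gives $1 \cdot 2^{p-1-(p-2)-1} = 2^{-1} = \tfrac{1}{2}$ (really $2^0 \cdot 2^{-1}$, but more carefully $p-1-(p-2+1) = 0$, so the source term is $2^0 = 1$), the $k=2$ term gives $(p-2) 2^{p-2}$, and the $k=3$ term gives $2^{p-1}$, matching the earlier proof; I would double-check these boundary arithmetic cases carefully since off-by-one errors in the exponent are the natural pitfall.

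The main obstacle I anticipate is not conceptual but bookkeeping: making sure the "allowed potential parents" count is exactly $(p-1)-\sum_{l=k+1}^n |\mathbf{T}_l|$ and not off by one due to whether $x_k$'s own tier is handled correctly (same-tier variables are allowed as parents, so only strictly-later tiers are excluded, and $x_k$ itself is already excluded from the $p-1$ potential parents in the base count). A secondary subtlety worth a sentence is that we are still computing a \emph{worst-case} (maximum) count as in Lemma~\ref{lemma:app_pa}, so the pruned parent graph of $x_k$ is itself a full parent graph over its reduced pool of allowed parents, and Lemma~\ref{lemma:app_pa} applies verbatim to that reduced pool. Once the per-variable count is pinned down, the tier-grouped summation is immediate and no further calculation is needed.
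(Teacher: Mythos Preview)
Your proposal is correct and follows essentially the same argument as the paper: for each variable in tier $\mathbf{T}_k$ you count the allowed potential parents as $(p-1)-\sum_{l=k+1}^n |\mathbf{T}_l|$, invoke Lemma~\ref{lemma:app_pa} to get the parent-graph size, and then sum over tiers with the last tier treated separately. The only caveat is the three-tier sanity check, where your corrected source term $2^0=1$ makes the general formula evaluate to $p\,2^{p-2}+1$ rather than exactly $p\,2^{p-2}$; this off-by-one reflects whether the empty parent set is counted as a score evaluation and is a discrepancy already present between the paper's two theorems, not an error in your argument.
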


\begin{proof}
Consider the tier $\mathbf{T}_k$ containing $|\mathbf{T}_k|$ variables, including the variable $x_i \in \mathbf{T}_k$. By the definition of tiered domain knowledge, any variable $x_j$ that is a member of a later tier $\mathbf{T}_{l}$, i.e.~$k<l$ cannot be a parent of $x_i$. The number of such variables $x_j$ that cannot be parents of $x_i$ is obtained by summing over the sizes of all subsequent tiers, i.e.~$\sum_{l=k+1}^n |\mathbf{T}_l|$. The size of the parent graph for variable $x_i$ is then $2^{p-1-\sum_{l=k+1}^n |\mathbf{T}_l|}$, where $p$ are the total number of variables. Consequently, the total number of score function evaluations for tier $\mathbf{T}_k$ is $|\mathbf{T}_k| 2^{p - 1 - \sum_{l=k+1}^n |\mathbf{T}_l|}$. This holds true for all $1 \leq k < n$, but not for the last tier $\mathbf{T}_n$, where no later tiers exist and the total number of score function evaluations remains $|\mathbf{T}_n| 2^{p-1}$. Note that elements in the first tier $\mathbf{T}_1$ no longer have empty parent graphs since edges between elements in the same tier are allowed. The total number of score function evaluations is then:
\begin{equation}
    \sum_{k=1}^{n-1} |\mathbf{T}_k| 2^{p - 1 - \sum_{l=k+1}^n |\mathbf{T}_l|} + |\mathbf{T}_n| 2^{p - 1}.
\end{equation}
\end{proof}

\subsection{Visualizing the parent graph pruning}

In Figure 2 of the main text, we visualize parent graphs that have been pruned due to the insertion of domain knowledge, specifically that of known edges and forbidden edges. We here provide further visualizations of pruned parent graphs for all types of domain knowledge. Specifically, Figure~\ref{fig:app_parent_graph_known} shows the pruned parent graphs of $x_1$ (left) and $x_4$ (right) when incorporating the known edge $x_4 \rightarrow x_1$, while Figure~\ref{fig:app_parent_graph_forbidden} shows the pruned parent graphs of $x_1$ (left) and $x_4$ (right) when incorporating a forbidden edge between them. Figure~\ref{fig:app_parent_graph_tiers} shows the parent graphs of $x_4$ (left) and $x_2$ (right) when incorporating tiers such that $x_1$ is a source variable and $x_4$ is a sink variable.

\begin{figure*}
\centering
\begin{minipage}{.47\linewidth}
\centering
\begin{tikzpicture}[
    every node/.append style={draw, ellipse, minimum width = 2cm, minimum height = 1cm, semithick}]
    \node[dotted] (empty) at (3,0) {\Large $\varnothing$};
    \node[dotted] (x2) at (0,-1.5) {$\{x_2\}$};
    \node[dotted] (x3) at (3,-1.5) {$\{x_3\}$};
    \node[fill=gray!10] (x4) at (6, -1.5) {$\{x_4\}$};
    \node[dotted] (x2x3) at (0,-3) {$\{x_2, x_3\}$};
    \node[fill=gray!10] (x2x4) at (3,-3) {$\{x_2, x_4\}$};
    \node[fill=gray!10] (x3x4) at (6,-3) {$\{x_3, x_4\}$};
    \node[fill=gray!10] (x2x3x4) at (3,-4.5) {$\{x_2, x_3, x_4\}$};

    \path[-latex, semithick]
    	(empty) edge[dotted, ->] (x2)
    	(empty) edge[dotted, ->] (x3)
    	(empty) edge[dotted, ->] (x4)
    	(x2) edge[dotted, ->] (x2x3)
    	(x2) edge[dotted, ->] (x2x4)
    	(x3) edge[dotted, ->] (x2x3)
    	(x3) edge[dotted, ->] (x3x4)
    	(x4) edge (x2x4)
    	(x4) edge (x3x4)
    	(x2x3) edge[dotted, ->] (x2x3x4)
    	(x2x4) edge (x2x3x4)
    	(x3x4) edge (x2x3x4);
\end{tikzpicture}
\end{minipage} \hfill
\begin{minipage}{0.47\linewidth}
\centering
\begin{tikzpicture}[
    every node/.append style={draw, ellipse, minimum width = 2cm, minimum height = 1cm, semithick}]
    \node[fill=gray!10] (empty) at (3,0) {\Large $\varnothing$};
    \node[dotted] (x1) at (0,-1.5) {$\{x_1\}$};
    \node[fill=gray!10] (x2) at (3,-1.5) {$\{x_2\}$};
    \node[fill=gray!10] (x3) at (6, -1.5) {$\{x_3\}$};
    \node[dotted] (x1x2) at (0,-3) {$\{x_1, x_2\}$};
    \node[dotted] (x1x3) at (3,-3) {$\{x_1, x_3\}$};
    \node[fill=gray!10] (x2x3) at (6,-3) {$\{x_2, x_3\}$};
    \node[dotted] (x1x2x3) at (3,-4.5) {$\{x_1, x_2, x_3\}$};

    \path[-latex, semithick]
    	(empty) edge[dotted, ->] (x1)
    	(empty) edge (x2)
    	(empty) edge (x3)
    	(x1) edge[dotted, ->] (x1x2)
    	(x1) edge[dotted, ->] (x1x3)
    	(x2) edge[dotted, ->] (x1x2)
    	(x2) edge (x2x3)
    	(x3) edge[dotted, ->] (x1x3)
    	(x3) edge (x2x3)
    	(x1x2) edge[dotted, ->] (x1x2x3)
    	(x1x3) edge[dotted, ->] (x1x2x3)
    	(x2x3) edge[dotted, ->] (x1x2x3);
\end{tikzpicture}
\end{minipage}
    \caption{Pruned parent graphs $\mathcal{G}_{Pa}(x_1)$ (left) and $\mathcal{G}_{Pa}(x_4)$ (right) when incorporating a known edge $x_4 \rightarrow x_1$.}
    \label{fig:app_parent_graph_known}
\end{figure*}
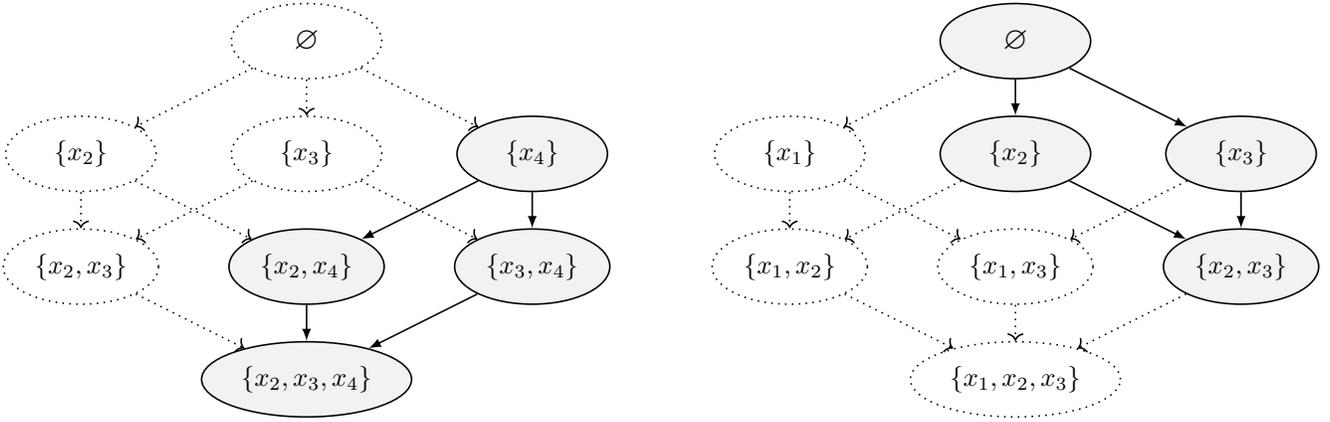

\begin{figure*}
\centering
\begin{minipage}{.47\linewidth}
\centering
\begin{tikzpicture}[
    every node/.append style={draw, ellipse, minimum width = 2cm, minimum height = 1cm, semithick}]
    \node[fill=gray!10] (empty) at (3,0) {\Large $\varnothing$};
    \node[fill=gray!10] (x2) at (0,-1.5) {$\{x_2\}$};
    \node[fill=gray!10] (x3) at (3,-1.5) {$\{x_3\}$};
    \node[dotted] (x4) at (6, -1.5) {$\{x_4\}$};
    \node[fill=gray!10] (x2x3) at (0,-3) {$\{x_2, x_3\}$};
    \node[dotted] (x2x4) at (3,-3) {$\{x_2, x_4\}$};
    \node[dotted] (x3x4) at (6,-3) {$\{x_3, x_4\}$};
    \node[dotted] (x2x3x4) at (3,-4.5) {$\{x_2, x_3, x_4\}$};

    \path[-latex, semithick]
    	(empty) edge (x2)
    	(empty) edge (x3)
    	(empty) edge[dotted, ->] (x4)
    	(x2) edge (x2x3)
    	(x2) edge[dotted, ->] (x2x4)
    	(x3) edge (x2x3)
    	(x3) edge[dotted, ->] (x3x4)
    	(x4) edge[dotted, ->] (x2x4)
    	(x4) edge[dotted, ->] (x3x4)
    	(x2x3) edge[dotted, ->] (x2x3x4)
    	(x2x4) edge[dotted, ->] (x2x3x4)
    	(x3x4) edge[dotted, ->] (x2x3x4);
\end{tikzpicture}
\end{minipage} \hfill
\begin{minipage}{0.47\linewidth}
\centering
\begin{tikzpicture}[
    every node/.append style={draw, ellipse, minimum width = 2cm, minimum height = 1cm, semithick}]
    \node[fill=gray!10] (empty) at (3,0) {\Large $\varnothing$};
    \node[dotted] (x1) at (0,-1.5) {$\{x_1\}$};
    \node[fill=gray!10] (x2) at (3,-1.5) {$\{x_2\}$};
    \node[fill=gray!10] (x3) at (6, -1.5) {$\{x_3\}$};
    \node[dotted] (x1x2) at (0,-3) {$\{x_1, x_2\}$};
    \node[dotted] (x1x3) at (3,-3) {$\{x_1, x_3\}$};
    \node[fill=gray!10] (x2x3) at (6,-3) {$\{x_2, x_3\}$};
    \node[dotted] (x1x2x3) at (3,-4.5) {$\{x_1, x_2, x_3\}$};

    \path[-latex, semithick]
    	(empty) edge[dotted, ->] (x1)
    	(empty) edge (x2)
    	(empty) edge (x3)
    	(x1) edge[dotted, ->] (x1x2)
    	(x1) edge[dotted, ->] (x1x3)
    	(x2) edge[dotted, ->] (x1x2)
    	(x2) edge (x2x3)
    	(x3) edge[dotted, ->] (x1x3)
    	(x3) edge (x2x3)
    	(x1x2) edge[dotted, ->] (x1x2x3)
    	(x1x3) edge[dotted, ->] (x1x2x3)
    	(x2x3) edge[dotted, ->] (x1x2x3);
\end{tikzpicture}
\end{minipage}
    \caption{Pruned parent graphs $\mathcal{G}_{Pa}(x_1)$ (left) and $\mathcal{G}_{Pa}(x_4)$ (right) when incorporating a forbidden edge between $x_1$ and $x_4$.}
    \label{fig:app_parent_graph_forbidden}
\end{figure*}
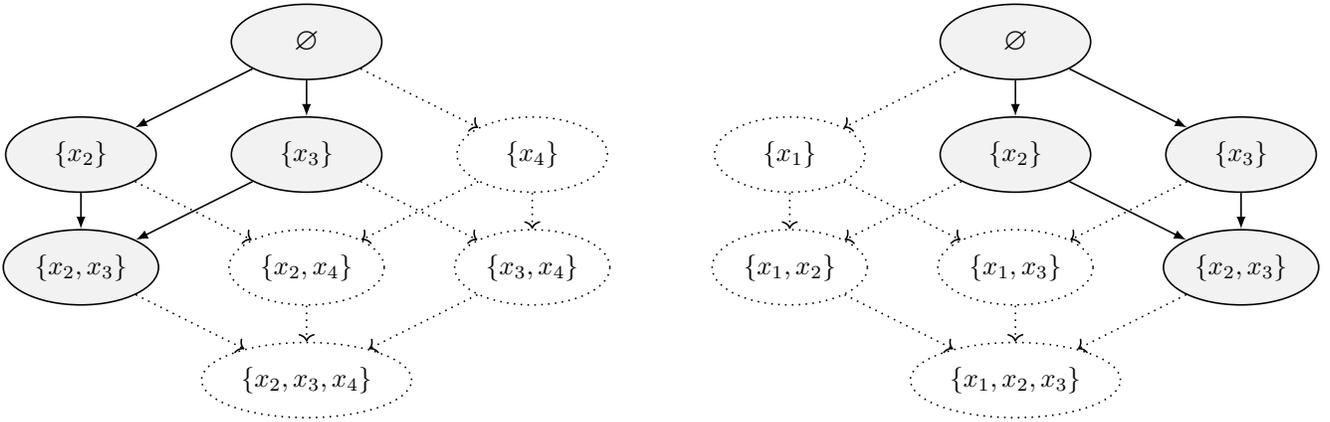

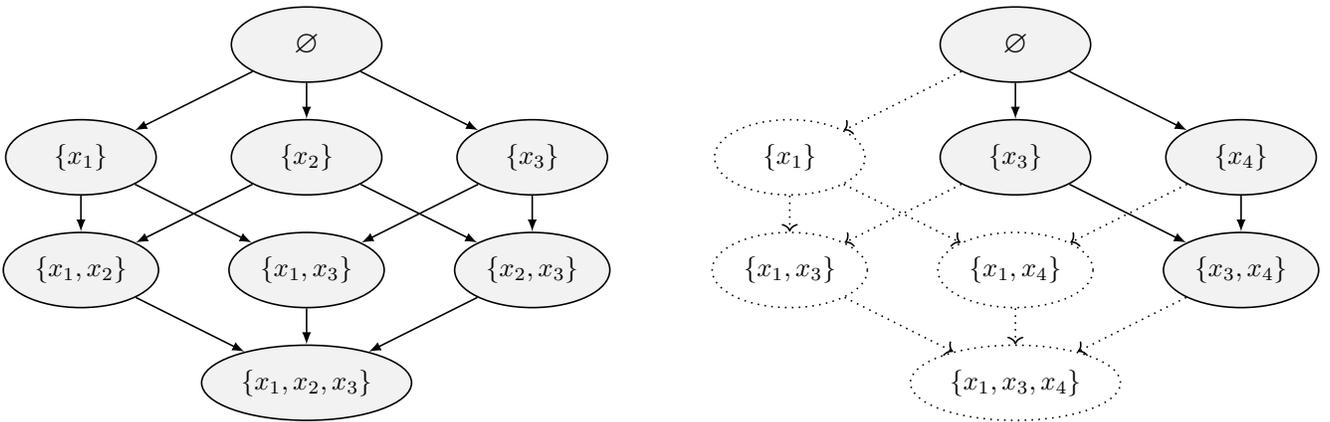
\begin{figure*}
\centering
\begin{minipage}{.47\linewidth}
\centering
\begin{tikzpicture}[
    every node/.append style={draw, ellipse, minimum width = 2cm, minimum height = 1cm, semithick}]
    \node[fill=gray!10] (empty) at (3,0) {\Large $\varnothing$};
    \node[fill=gray!10] (x1) at (0,-1.5) {$\{x_1\}$};
    \node[fill=gray!10] (x2) at (3,-1.5) {$\{x_2\}$};
    \node[fill=gray!10] (x3) at (6, -1.5) {$\{x_3\}$};
    \node[fill=gray!10] (x1x2) at (0,-3) {$\{x_1, x_2\}$};
    \node[fill=gray!10] (x1x3) at (3,-3) {$\{x_1, x_3\}$};
    \node[fill=gray!10] (x2x3) at (6,-3) {$\{x_2, x_3\}$};
    \node[fill=gray!10] (x1x2x3) at (3,-4.5) {$\{x_1, x_2, x_3\}$};

    \path[-latex, semithick]
    	(empty) edge (x1)
    	(empty) edge (x2)
    	(empty) edge (x3)
    	(x1) edge (x1x2)
    	(x1) edge (x1x3)
    	(x2) edge (x1x2)
    	(x2) edge (x2x3)
    	(x3) edge (x1x3)
    	(x3) edge (x2x3)
    	(x1x2) edge (x1x2x3)
    	(x1x3) edge (x1x2x3)
    	(x2x3) edge (x1x2x3);
\end{tikzpicture}
\end{minipage} \hfill
\begin{minipage}{0.47\linewidth}
\centering
\begin{tikzpicture}[
    every node/.append style={draw, ellipse, minimum width = 2cm, minimum height = 1cm, semithick}]
    \node[fill=gray!10] (empty) at (3,0) {\Large $\varnothing$};
    \node[dotted] (x1) at (0,-1.5) {$\{x_1\}$};
    \node[fill=gray!10] (x3) at (3,-1.5) {$\{x_3\}$};
    \node[fill=gray!10] (x4) at (6, -1.5) {$\{x_4\}$};
    \node[dotted] (x1x3) at (0,-3) {$\{x_1, x_3\}$};
    \node[dotted] (x1x4) at (3,-3) {$\{x_1, x_4\}$};
    \node[fill=gray!10] (x3x4) at (6,-3) {$\{x_3, x_4\}$};
    \node[dotted] (x1x3x4) at (3,-4.5) {$\{x_1, x_3, x_4\}$};

    \path[-latex, semithick]
    	(empty) edge[dotted, ->] (x1)
    	(empty) edge (x3)
    	(empty) edge (x4)
    	(x1) edge[dotted, ->] (x1x3)
    	(x1) edge[dotted, ->] (x1x4)
    	(x3) edge[dotted, ->] (x1x3)
    	(x3) edge (x3x4)
    	(x4) edge[dotted, ->] (x1x4)
    	(x4) edge (x3x4)
    	(x1x3) edge[dotted, ->] (x1x3x4)
    	(x1x4) edge[dotted, ->] (x1x3x4)
    	(x3x4) edge[dotted, ->] (x1x3x4);
\end{tikzpicture}
\end{minipage}
    \caption{Pruned parent graphs when incorprating the tiers $\mathbf{T} = \{\{x_1\}, \{x_2, x_3\}, \{x_4\}\}$ such that $x_1$ is a source variable and $x_4$ is a sink variable. Shown are the parent graphs $\mathcal{G}_{Pa}(x_4)$ for the sink variable $x_4$ (left), which in fact does not change, and $\mathcal{G}_{Pa}(x_2)$ (right), which has the same number of nodes as the pruned parent graph $\mathcal{G}_{Pa}(x_3)$. Note that the pruned parent graph $\mathcal{G}_{Pa}(x_1)$ only contains the empty set.}
    \label{fig:app_parent_graph_tiers}
\end{figure*}

\section{Additional experiment information}

\begin{figure*}
    \centering
    \includegraphics[width=0.91\linewidth]{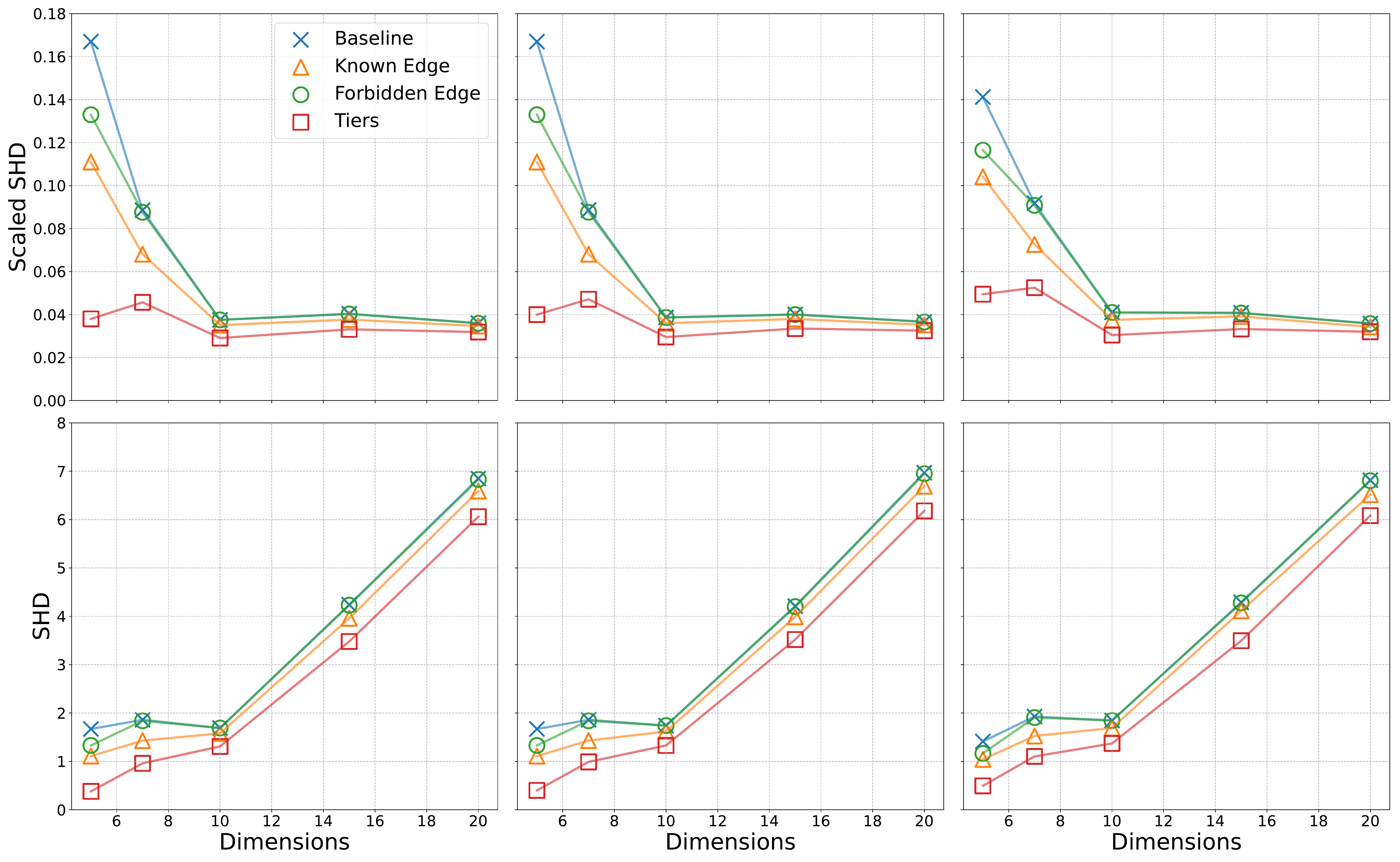}
    \caption{Scaled SHD (top row) and regular SHD (bottom row) for varying dimensions when applying regular {\astar} (left column), {\astar}-SuperStructure (middle column) and Local {\astar} (right column) on the synthetic data set, averaged over $100$ repeats.}
    \label{fig:synthetic_results_shd_app}
\end{figure*}

\begin{figure*}
    \centering
    \includegraphics[width=0.91\linewidth]{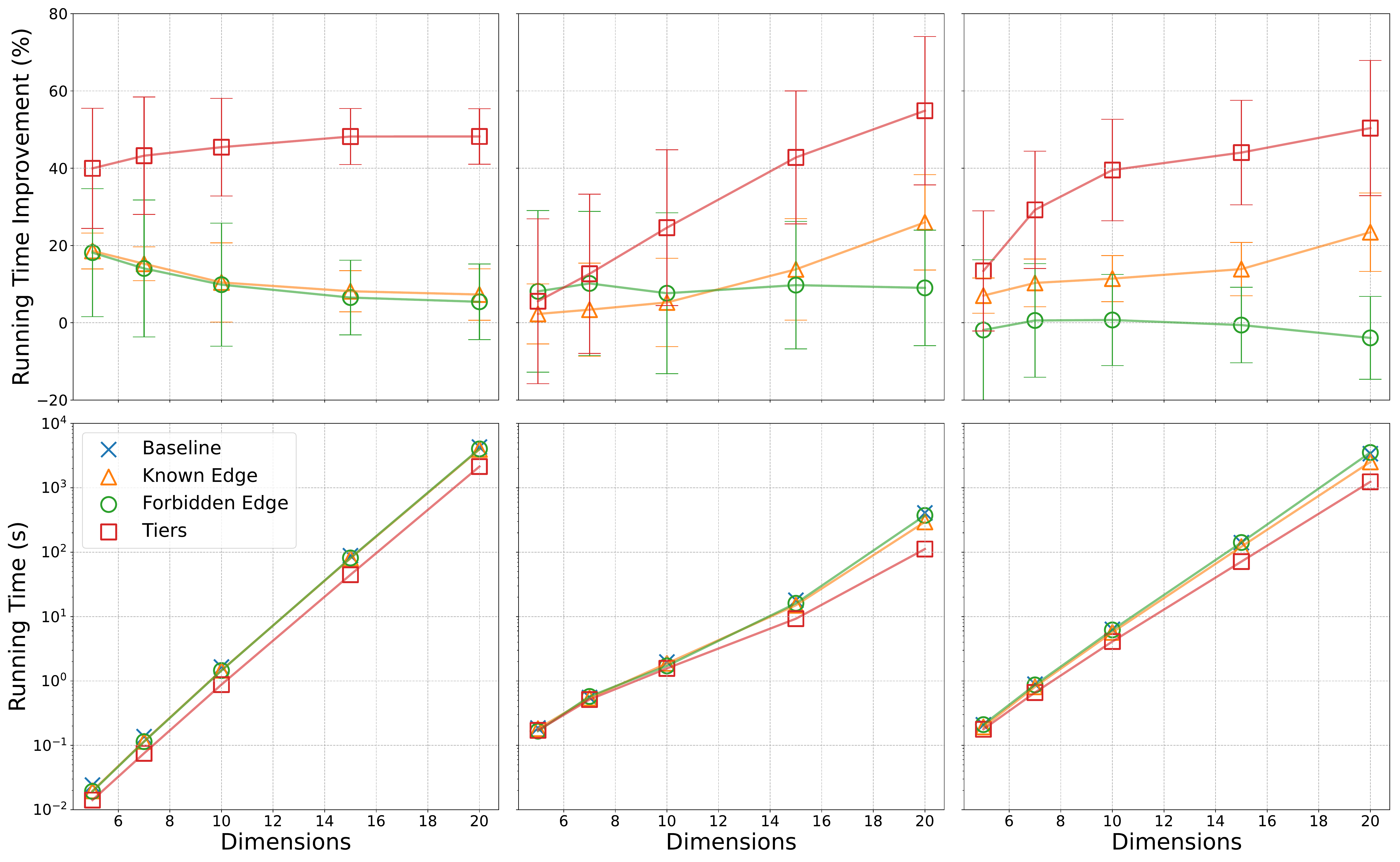}
    \caption{Percentage running time improvements (top row) and absolute running time (bottom row) for varying dimensions when applying regular {\astar} (left column), {\astar}-SuperStructure (middle column) and Local {\astar} (right column) on the synthetic data set, averaged over $100$ repeats. The error bars indicate one standard deviation from the mean.}
    \label{fig:synthetic_results_runtime_app}
\end{figure*}

\subsection{Evaluation}

As mentioned in the main text, we evaluate the performance of all causal discovery methods by their ability to recover the true graph, as measured by the structural hamming distance (SHD) between \emph{modified} CPDAGs. Recall that {\astar}-based methods return a \emph{modified} CPDAG, where Meek's rules have been applied post-discovery to ensure that the resulting CPDAG complies with the provided domain knowledge. To ensure a fair comparison, we therefore compute the CPDAG of the ground-truth DAG and then similarly apply Meek's rules. This allows us to measure causal discovery performance via the SHD between the modified CPDAG estimate and the modified CPDAG of the ground-truth. We consider the SHD between different types of edges in a CPDAG to be $1$ unless they are exactly the same, e.g.~$\text{SHD}(\leftarrow, \rightarrow) = 1$, $\text{SHD}(\leftarrow, \text{---}) = 1$ and $\text{SHD}(\leftarrow, \leftarrow) = 0$.
Furthermore, in order to allow for a fair comparison of running times, all experiments were run on a single AMD Ryzen 7 3800X CPU.

\subsection{Synthetic data}

\begin{table*}[!t]
\centering
\begin{tabular}{lccccc}  
\toprule
& \multicolumn{5}{c}{Dimensions} \\
\cmidrule(r){2-6}
 & 5 & 7 & 10 & 15 & 20 \\
\midrule
{\astar}                 & 0.0001 & 0.0003 & 0.0020 & 0.0003 & 0.0001 \\
{\astar}-SuperStructure  & 0.0001 & 0.0003 & 0.0012 & 0.0017 & 0.0001 \\
Local {\astar}           & 0.0060 & 0.0004 & 0.0017 & 0.0614 & 0.0001 \\
\bottomrule
\end{tabular}
\caption{P-values from a student's t-test with the null hypothesis that integrating \emph{known edges} improves SHD.}
\label{table1}
\end{table*}

\begin{table*}[!t]
\centering
\begin{tabular}{lccccc}  
\toprule
& \multicolumn{5}{c}{Dimensions} \\
\cmidrule(r){2-6}
 & 5 & 7 & 10 & 15 & 20 \\
\midrule
{\astar}                 & 0.0144 & 0.2085 & N/A & 0.1599 & 0.0792 \\
{\astar}-SuperStructure  & 0.0144 & 0.2085 & N/A & 0.1599 & 0.0792 \\
Local {\astar}           & 0.0343 & 0.2085 & N/A & 0.1599 & 0.1599 \\
\bottomrule
\end{tabular}
\caption{P-values from a student's t-test with the null hypothesis that integrating \emph{forbidden edges} improves SHD. N/A indicates that no student's t-test could be performed because no run showed any SHD improvements.}
\label{table2}
\end{table*}

\begin{table*}[!t]
\centering
\begin{tabular}{lccccc}  
\toprule
& \multicolumn{5}{c}{Dimensions} \\
\cmidrule(r){2-6}
 & 5 & 7 & 10 & 15 & 20 \\
\midrule
{\astar}                 & $0.0007 \times 10^{-4}$ & $0.0182 \times 10^{-4}$ & $6.8020 \times 10^{-4}$ & $0.7873 \times 10^{-4}$ & $0.0040 \times 10^{-4}$ \\
{\astar}-SuperStructure  & $0.0006 \times 10^{-4}$ & $0.0356 \times 10^{-4}$ & $3.8586 \times 10^{-4}$ & $0.7552 \times 10^{-4}$ & $0.0010 \times 10^{-4}$ \\
Local {\astar}           & $0.3255 \times 10^{-4}$ & $0.2331 \times 10^{-4}$ & $1.1118 \times 10^{-4}$ & $0.2188 \times 10^{-4}$ & $0.0007 \times 10^{-4}$ \\
\bottomrule
\end{tabular}
\caption{P-values from a student's t-test with the null hypothesis that integrating \emph{tiers} improves SHD.}
\label{table3}
\end{table*}

In our synthetic data experiments, we generate random ground-truth DAGs $\mathcal{G}$ using the Erd\H{o}s-R\'{e}nyi algorithm, with an average degree of $2$ and varying dimensions of $p \in \{ 5, 7, 10, 15, 20\}$. We then generate data by means of a linear structural equation model (SEM), i.e.~
\begin{equation}
    x_i = \sum_j^{|Pa_{\mathcal{G}}(x_i)|} w_j x_j + \epsilon_i,
\end{equation}
where $x_j \in Pa_{\mathcal{G}}(x_i)$ is a parent of $x_i$ in $\mathcal{G}$. The linear weights $w_j$ are randomly sampled from either $U(-0.2, -0.8)$ or $U(0.2, 0.8)$, chosen by a coin flip. The noise variables $\epsilon_i$ are sampled from a Gaussian distribution $\epsilon_i \sim \mathcal{N}(0, \sigma_i)$, where $\sigma_i \sim U(1, 2)$. For each random DAG $\mathcal{G}$ we generate $500$ observations using the same set of linear weights and noise standard deviations $\{\sigma_i\}_{i=1}^p$.

We here provide further experimental results that supplement those in the main text. Specifically, Figure~\ref{fig:synthetic_results_shd_app} shows the scaled SHD (top row) and regular (SHD) for all types of domain knowledge, for all {\astar}-based methods and for varying dimensions. The SHD improvements of {\astar}-SuperStructure and Local {\astar} are largely similar to those of regular {\astar}, with Local {\astar} performing slightly worse. Similarly, Figure~\ref{fig:synthetic_results_runtime_app} shows additional figures for computational running time improvements. The top row shows the percentage improvements including error bars, corresponding to one standard deviation. The behavior of Local {\astar} is relatively similar to that of {\astar}-SuperStructure. Notably, however, for Local {\astar} the running times when integrating forbidden edges (shown in green) are worse than when not integrating any domain knowledge. The poor running time improvements when integrating forbidden edges with {\astar}-SuperStructure and Local {\astar}, as compared to integrating other types of domain knowledge, may be explained by the fact that graphical LASSO also prunes the parent graphs. Specifically, the pruning resulting from applying graphical LASSO has the same effect as integrating forbidden edges, since the resulting super-structure contains information about which edges might exist and, more importantly, which edges cannot exist. In this sense, integrating forbidden edges does not help with the running times of {\astar}-SuperStructure and Local {\astar} because graphical LASSO is likely to already have captured those forbidden edges as part of estimating the super-structure.

We note that all of our experiments were run on a single CPU and, unlike the other methods, Local {\astar}
has the potential to be parallelized over several CPUs~\citep{ng2021}, which means that it could potentially be faster when being parallelized. The construction of local clusters in Local {\astar} (which would be the aspect to parallelize) carries some computational overhead, which would explain its poor performance on a single CPU core. The bottom row in Figure~\ref{fig:synthetic_results_runtime_app} shows corresponding absolute running times in seconds. {\astar}-SuperStructure and Local {\astar} tend to be slower than regular {\astar} for small dimensions, presumably due to the computational overhead of the super-structure estimation, while they are faster for larger dimensions. {\astar}-SuperStructure seems to have exceptional running time performance for large dimensions, as compared to the other methods.

Lastly, we present p-values from student's t-tests with the null hypothesis that integrating known edges (Table~\ref{table1}), forbidden edges (Table~\ref{table2}) or tiers (Table~\ref{table3}) improves the SHD, as compared to not integrating any domain knowledge. Tables~\ref{table1} and Table~\ref{table3} show that we can be confident, with a p-value threshold of $0.05$, that integrating known edges and tiers improves the SHD for nearly all dimensions. However, Table~\ref{table2} shows that the same is not true for integrating forbidden edges, as we can only confidently say that this improves SHD for small, i.e.~$5$, dimensions.  

\subsection{Sachs data}

In Figure~\ref{fig:sachs_graph} we visualize the ground-truth DAG corresponding to the real, protein signal-processing data set of~\citet{sachs2005}. This ground-truth was formed using expert knowledge and, partly, interventional data. See~\citet{sachs2005} for more information.

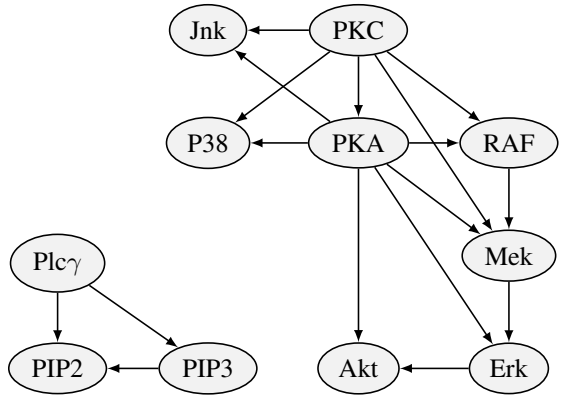
\begin{figure}[h!]
\centering
\begin{tikzpicture}[
    every node/.append style={draw, ellipse, minimum width = 1cm, minimum height = 0.5cm, semithick}]
    \node[fill=gray!10] (plcg) at (0,1.4) {Plc$\gamma$};
    \node[fill=gray!10] (pip2) at (0,0) {PIP2};
    \node[fill=gray!10] (pip3) at (2,0) {PIP3};
    \node[fill=gray!10] (pkc) at  (4,4.5) {PKC};
    \node[fill=gray!10] (jnk) at  (2,4.5) {Jnk};
    \node[fill=gray!10] (p38) at  (2,3) {P38};
    \node[fill=gray!10] (pka) at  (4,3) {PKA};
    \node[fill=gray!10] (raf) at  (6,3) {RAF};
    \node[fill=gray!10] (mek) at  (6,1.5) {Mek};
    \node[fill=gray!10] (erk) at  (6,0) {Erk};
    \node[fill=gray!10] (akt) at  (4,0) {Akt};

    \path[-latex, semithick]
    	(plcg) edge (pip2)
    	(plcg) edge (pip3)
    	(pip3) edge (pip2)
    	(pkc) edge (pka)
    	(pkc) edge (jnk)
    	(pkc) edge (p38)
    	(pka) edge (jnk)
    	(pka) edge (p38)
    	(pkc) edge (raf)
    	(pka) edge (raf)
    	(pka) edge (mek)
    	(pkc) edge (mek)
    	(pka) edge (erk)
    	(pka) edge (akt)
    	(raf) edge (mek)
    	(mek) edge (erk)
    	(erk) edge (akt);
\end{tikzpicture}
    \caption{Ground-truth DAG for the real protein signalling dataset of~\citet{sachs2005}, formed by experts using interventional data.}
    \label{fig:sachs_graph}
\end{figure}

\end{document}